\newif\ifnotes
\newcommand{\natinote}[1]{\ifnotes $\ll$\textsf{\color{magenta} Nati: { #1}}$\gg$ \fi}
\newcommand{\natinotelater}[1]{}
\definecolor{asparagus}{rgb}{0.53, 0.66, 0.42}
\definecolor{sacramentostategreen}{rgb}{0.0, 0.3, 0.15}
\definecolor{teal}{rgb}{0.0, 0.5, 0.5}
\definecolor{forestgreen}{rgb}{0.13, 0.6, 0.13}
\newtheorem{theorem}{Theorem}[section]
\newtheorem*{definition*}{Definition}
\newtheorem{lemma}[theorem]{Lemma}
\newtheorem{corollary}[theorem]{Corollary}
\newcommand{\E}{\mathbb{E}}
\newcommand{\cD}{\mathcal{D}}
\def\to{\rightarrow}
\def\eps{\varepsilon}
\def\eps{\varepsilon}
\renewcommand{\bar}{\overline}
\newcommand{\inparen}[1]{\left(#1\right)}             %\inparen{x+y}  is (x+y)
\newcommand{\inbraces}[1]{\left\{#1\right\}}           %\inbrace{x+y}  is {x+y}
\newcommand{\insquare}[1]{\left[#1\right]}             %\insquare{x+y}  is [x+y]
\newcommand{\B}{\inbraces{0,1}}
\newcommand{\indicator}[1]{\mathbbm{1}\inbraces{#1}}
\newcommand{\abs}[1]{\ensuremath{\left\lvert #1 \right\rvert}}
\let\nfrac=\nicefrac
\renewcommand{\Pr}{\mathsf{Pr}}
\newcommand{\exv}[1]{\E\insquare{#1}}
\newcommand{\exvv}[2]{\underset{#1}{\E}\insquare{#2}}
\newcommand{\expv}[1]{\mathsf{exp}\inparen{#1}}
\newcommand{\prv}[1]{\Pr\insquare{#1}}
\newcommand{\prvv}[2]{\underset{#1}{\Pr}\insquare{#2}}
\newcommand{\logv}[1]{\log\inparen{#1}}
\newcommand{\customlabel}[2]{%
   \protected@write \@auxout {}{\string \newlabel {#1}{{#2}{\thepage}{#2}{#1}{}} }%
   \hypertarget{#1}{#2}
}
\newcounter{casenum}
\newcounter{subcasenum}
\newcounter{casenump}
\newcommand{\casep}[2]{
    \ifthenelse{\equal{\value{casenump}}{0}}{
    \vskip.5\baselineskip\par\noindent
    }{}
    {\it Case \arabic{casenump}:} {\it #1}
    \vskip0.1\baselineskip
    \begin{addmargin}[1.5em]{1em}
    #2
    \end{addmargin}
    \addtocounter{casenump}{1}
}
\newcounter{subcasenump}
\definecolor{ForestGreen}{RGB}{34, 139, 34}
\newcommand{\removed}[1]{}
\DeclarePairedDelimiterX{\infdivx}[2]{(}{)}{%
  #1\;\delimsize\|\;#2%
}
\newcommand{\infdiv}{D_{\mathsf{KL}}\infdivx}
\newcommand{\mdl}{\mathsf{MDL}}
\newcommand{\trainset}{S}
\newcommand{\prog}{\mathsf{h}}
\newcommand{\progw}{\mathsf{u}}
\newcommand{\ProgW}{\mathsf{U}}
\newcommand{\nsamples}{m}
\newcommand{\xlength}{b}
\newcommand{\seed}{\mathsf{seed}}
\newcommand{\seedlength}{r}
\newcommand{\htable}{\mathsf{hash}}
\newcommand{\hstar}{\mathsf{h}^{\star}}
\newcommand{\ber}[1]{\mathsf{Ber}\inparen{#1}}
\newcommand{\kld}[2]{\infdiv*{#1}{#2}}
\newcommand{\sign}[1]{\mathsf{Sign}{#1}}
\newcommand{\Ber}{\mathsf{Ber}}
\newcommand{\rhs}{R}
\newcommand{\gagg}{\ell_{\mathsf{ag}}}
\newcommand{\gsamp}{\ell_{\mathsf{samp}}}
\newcommand{\minH}{H_{\min}}
\newcommand{\minHx}{\minH(X[:\!\xlength])}
\newcommand{\minHxt}{\minH(X[:\!\tilde\xlength])}
\newcommand{\Lstar}{L^{\star}}
\newcommand{\Ltilde}{\widetilde{L}}
\newcommand{\papertitle}{Interpolation Learning With Minimum Description Length}
\numberwithin{equation}{section}
\numberwithin{algorithm}{section}
\begin{document}

\title{\papertitle}
% \author{}
\author{Naren Sarayu Manoj \\ \texttt{nsm@ttic.edu}
%\\Toyota Technological Institute at Chicago 
\and Nathan Srebro\\ \texttt{nsrebro@ttic.edu}
%\\Toyota Technological Institute at Chicago
}
\date{\vspace{-5mm}Toyota Technological Institute at Chicago}

\sloppy
\maketitle
\thispagestyle{empty}
% \newpage
% \tableofcontents
% \pagenumbering{roman}
% \newpage
\pagenumbering{arabic}

\begin{abstract}
    We prove that the Minimum Description Length learning rule exhibits tempered overfitting. We obtain tempered agnostic finite sample learning guarantees and characterize the asymptotic behavior in the presence of random label noise.
\end{abstract}

\section{Introduction}

We consider the minimum description length learning rule $\mdl(\trainset)$, which returns the predictor with minimal description length (in some universal description or programming language) that fits the training set.  MDL learning is well understood in the realizable setting -- if there exists some $\hstar$ that is perfect on the source distribution, i.e.~with zero population loss $L(\hstar)=0$, then $O(\abs{\hstar}/\eps)$ samples are enough for $\mdl$ to have (expected) population loss at most $\eps$, where $\abs{\hstar}$ is the description length of $\hstar$.  But to handle noisy situations, or compete with a short-description predictor $\prog$ that might not be perfect, the standard wisdom is to follow the Structural Risk Minimization (SRM) principle and balance training error against description length. By minimizing the right combination of training error and description length (perhaps tuned through validation), such an SRM predictor can compete with any predictor $\prog$, and using a training set of size $\nsamples$ has expected error at most
\begin{equation}\label{eq:srm}
    \inf_\prog \left( L(\prog) + O\left( \frac{\abs{\prog}}{\nsamples} + \sqrt{L(\prog)\cdot\frac{\abs{\prog}}{\nsamples}} \right) \right)
\end{equation}
But following recent interest in benign overfitting and interpolation learning of noisy data \cite[and many others]{belkin2018overfittingperfectfitting,belkin2019interpolationoptimality,negrea:in-defense,bartlett2020benignpnas,montanari2020maxmarginasymptotics,hastie2020surprises,muthukumar2021classification,chatterji2020linearnoise}, we ask: \textit{what happens if we insist on interpolating (i.e.~obtaining zero training error) and using the interpolating $\mdl$ rule?}  Does $\mdl$ overfit benignly?  Does it still enjoy the same guarantee \eqref{eq:srm} as SRM?  Is it consistent like SRM, i.e., does it converge to the Bayes optimal predictor (as long as the Bayes optimal predictor has finite description)? Or is overfitting by $\mdl$ catastrophic, possibly yielding worthless predictions?  Or perhaps tempered \cite[as defined in][]{mallinar2022benign} with error worse than the optimally balanced SRM, but still better than random guessing?  If so, can we bound the error of the interpolating MDL compared to the optimally balanced SRM?  How much worse can it be compared to the SRM guarantee \eqref{eq:srm} ?

We show that MDL overfitting is not benign, with asymptotic error that could be worse than SRM.  But we can bound this error away from $0.5$, as a simple fixed function of the Bayes error, depicted in Figure \ref{fig}.  For a random label noise model, we obtain a tight and precise characterization of the asymptotic error.  Furthermore, we obtain an agnostic finite sample guarantee, which holds for any source distribution, without any realizability or specification assumptions, and tells us how well we compete with any competitor hypothesis (not necessarily the Bayes optimal).  This contrasts with much of the existing work on benign overfitting which is distribution-specific, e.g.~making Gaussianity assumptions on the data, and often assuming the model is well specified.

Our analysis essentially follows a uniform convergence approach, and decouples the analyses of the description length of $\mdl$ from that of the generalization error for short programs.  In Section \ref{sec:interpolation} we bound the  minimum description length $\abs{\mdl(S)}$ by proving an upper bound on the program needed to interpolate a noisy training set.  Then in Section \ref{sec:generalization} we bound the expected error of any learning rule returning short programs in terms of the length of the program.  Our learning guarantees, stated in Section \ref{sec:main}, then follow immediately by combining the two.  

Although we use an information-theoretic approach in our generalization proofs, the proofs essentially rely on a uniform guarantee over all short programs.  In particular, they hold for {\em any} interpolation rule, not only $\mdl$, and the connection to $\mdl$ is only by plugging in the program length we can ensure for $\mdl$.  This is similar in spirit to the uniform convergence of interpolator arguments of \citet{koehler2021uniform,wang2022tight}, which separately bound the norm of the min-norm predictor, and then analyze uniform convergence over the appropriate norm ball.  

\paragraph{Notation} We write Bernoulli random variables with parameter $\alpha$ as $\ber{\alpha}$. We use $H(X)$ to denote the entropy of random variable. We also write $H(\alpha)$ to denote the entropy of a $\ber{\alpha}$-random variable.  The Radon-Nikodym derivative between two distributions $p$ and $q$ is denoted $dp/dq$, and one can informally think of $dp(\cdot)$ as the probability density or mass function.  We measure information in bits, and $\log$ is always base $2$.  The operation $\oplus$ denotes the XOR of two bits. For two random variables $A$ and $B$, we write $A \perp B$ to mean that $A$ and $B$ are independent.
  
%\input{paper_prelim}
%\begin{wrapfigure}{r}{0.5\textwidth}
\begin{figure}[t]
    \centering
    % \includesvg[width=\textwidth]{figures/mdl_plot.svg} %
    \includegraphics[width=\textwidth]{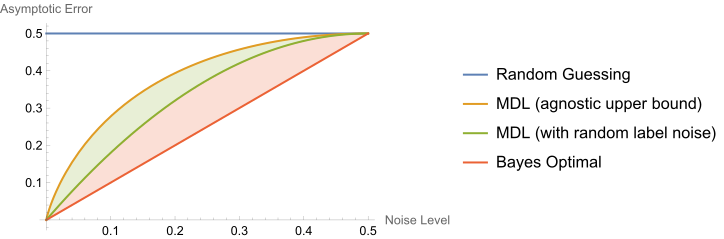}
    % Plot[{1/2, 1 - p^p (1 - p)^(1 - p), 2 p (1 - p), p}, {p, 0, 0.5`}, 
 % Filling -> {4 -> {3}, 3 -> {2}}, 
 % AxesLabel -> {"Noise Level", "Asymptotic Error"}, 
 %PlotLegends -> 
 % Placed[{"Random Guessing", "MDL (agnostic upper bound)", 
  %  "MDL (with random label noise)", "Bayes Optimal"}, Right]]
  \vspace{-9mm}
    \caption{ Behavior of interpolating MDL as a function of the noise level.  Top curve: The function $\gagg(\Lstar)$, which provides an agnostic upper bound on the error of $\mdl$, with a finite sample gurantee that approaches this curve; Contrast with the lower curve: the function $\gsamp(\Lstar)$, which is the exact asymptotic error of $\mdl$ under random label noise.}
    \label{fig}
%    \vspace{-5mm}
\end{figure}
%\end{wrapfigure}

\section{Formal Setup and Main Results}\label{sec:main}

We consider learning based on $\nsamples$ i.i.d. samples $S=\{ (x_1,y_1),\ldots,(x_\nsamples,y_\nsamples)\}\sim \cD^\nsamples$ from a source distribution $\cD(X,Y)$ over bit-strings $X$ and binary labels $Y \in \{0,1\}$.  A learning rule is a mapping $A:\trainset\mapsto\prog$ from training sets to predictors.  To formalize the notion of description length with a universal description language, we can think of the predictors $\prog$ as programs in some prefix-unambiguous Turing complete programming language, and we use $\abs{\prog}$ to denote program or description length in bits.  We denote the training error as $L_{\trainset}(\prog)=\tfrac{1}{\nsamples}\indicator{\prog(x_i)\neq y_i}$. We say $\prog$ interpolates $\trainset$ if $L_{\trainset}(\prog)=0$ and that $A$ is an interpolating rule if $L_{\trainset}(A(\trainset))=0$ almost surely.   We denote the population error by $L(\prog)=\prvv{(X,Y)\sim\cD}{\indicator{\prog(x_i)\neq y_i}}$ and we use the same notation for the expected error of a learning rule: $L(A)=\exvv{S\sim\cD^\nsamples}{L_\cD(A(S))}$.

In order to discuss interpolation learning, we must ensure it is always indeed possible to interpolate.  This is the case if we never encounter the same instance $x$ with different labels, i.e.~
\begin{equation}\label{eq:intable}
\prv{ X=X' \wedge Y \neq Y'}=0 \quad\textrm{ for} \quad (X,Y),(X',Y')\sim \textrm{ i.i.d. } \cD.   
\end{equation}
We will thus always assume \eqref{eq:intable}.  This is the case when $Y$ is a deterministic function of $X$.  But we are particularly interested in noisy settings, in which case \eqref{eq:intable} holds if $\cD$ is non-atomic, i.e.~$\prv{X=X'}=0$.  In order to discuss non-atomic distributions over bit-strings, we will allow $X\in\{0,1\}^\mathbb{N}$ to be an infinite\footnote{To capture also finite bit strings $x\in\{0,1\}^*$, we can think of padding $x$ with an infinite number of zeros} sequence of bits (e.g.~the binary digits of a real number).   The programs\footnote{Formally, when discussing programs taking an infinite $x$ as input, we can think of a RAM computer which is allowed random access to bits of $x$, or a Turing Machine given access to $x$ on an infinite tape.} we learn will only be able to access a finite number of bits of $x$, and it will be useful for us to consider prefixes $x[:\!\xlength]$ consisting of the first $\xlength$ bits of $x$.  Although we consider infinite bit sequences, we will need to bound how far we need to read in order to distinguish between instances. We formalize this notion through the following definition (Definition \ref{def:inter_prefix_length}).

\begin{definition*}
\label{def:inter_prefix_length}
The {\bf disambiguation prefix length} $\xlength(\trainset)$ of a sample $\trainset$ is the minimal $\xlength$ such that for all $(x_i, y_i), (x_j, y_j) \in \trainset$, if $x_i[:\!\xlength]=x_j[:\!\xlength]$, then $(x_i,y_i)=(x_j,y_j)$.
% $$\forall_{(x_i,y_i),(x_j,y_j)\in \trainset}\;\;x_i[:\!\xlength]=x_j[:\!\xlength]\rightarrow y_i=y_j.$$  
The {\bf quenched disambiguation prefix length} $\overline{\xlength}(\nsamples)$ of a distribution $\cD$ for sample size $\nsamples$ is given by
\begin{align*}
    \log \overline{\xlength}(\nsamples) \coloneqq \exvv{\trainset\sim\cD^\nsamples}{\log \xlength(\trainset)} \leq \log \left(\exvv{\trainset\sim\cD^\nsamples}{\xlength(\trainset)}\right)
\end{align*}
\end{definition*}

\noindent With these definitions in hand, we are ready to state our main results.

\begin{theorem}[Agnostic]
\label{thm:agnostic_main}
For any source distribution $\cD$ with quenched disambiguation prefix length $\overline{\xlength}(\nsamples)$, and any sample size $\nsamples$:
\begin{align*}
    \exvv{\trainset\sim\cD^\nsamples}{L(\mdl(\trainset))} \leq \inf_\prog \inparen{ \gagg(L(\prog)) + O\inparen{ \frac{ \abs{\prog} + \log(\nsamples\cdot\overline{\xlength}(\nsamples))}{\nsamples}}}
\end{align*}
% $$\exvv{\trainset\sim\cD^\nsamples}{L(\mdl(\trainset))} \leq \inf_\prog \left( g(L_\cD(\prog)) + O\left( \frac{ \abs{\prog} + \log(\nsamples\cdot\overline{\xlength}(\nsamples))}{\nsamples} + \sqrt{ L(\prog) \frac{ \abs{\prog} + \log(\nsamples\cdot \overline{\xlength}(\nsamples))}{\nsamples} }  \right)    \right)$$
where $\gagg(\alpha)\doteq 1-2^{-H(\alpha)}=1-\alpha^\alpha(1-\alpha)^{1-\alpha}$ and $\alpha < \gagg(\alpha)<0.5$ for $0<\alpha<0.5$. %\rightarrow \gagg(\alpha)<0.5$. % and is depicted in Figure \ref{fig:g}.
\end{theorem}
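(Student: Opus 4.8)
Here the plan is to prove the bound by decoupling it, exactly as previewed in the introduction, into (a) an upper bound on the description length $\abs{\mdl(\trainset)}$ of the interpolator and (b) a generalization bound valid for \emph{any} interpolating rule that outputs short programs, stated purely in terms of the program length; the function $\gagg$ will drop out of piece~(b) when the length bound of piece~(a) is substituted in. For piece~(a), fix an arbitrary competitor program $\prog$ and, given $\trainset$, build the interpolating program $\hhat_{\prog,\trainset}$ that on input $x$ computes $\prog(x)$ and then flips this bit exactly on the inputs of the training points on which $\prog$ errs. Writing $k \doteq \nsamples\, L_{\trainset}(\prog)$ for the number of such points, this program needs a copy of $\prog$, the disambiguation length $\xlength(\trainset)$ (which guarantees that reading $\xlength(\trainset)$ bits of $x$ distinguishes all training points), and an encoding of \emph{which} $k$ of the $\nsamples$ training points are exceptional. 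Encoding that subset via the combinatorial number system costs $\log\binom{\nsamples}{k}$ bits, and with $O(\log(\nsamples\cdot\xlength(\trainset)))$ further bits for the addressing machinery and an $O(1)$ wrapper we get, since $\mdl$ returns the \emph{shortest} interpolating program,
\begin{equation*}
    \abs{\mdl(\trainset)} \;\le\; \abs{\hhat_{\prog,\trainset}} \;\le\; O(\abs{\prog}) + \log\binom{\nsamples}{\nsamples L_{\trainset}(\prog)} + O\!\left(\log\bigl(\nsamples\cdot\xlength(\trainset)\bigr)\right).
\end{equation*}
Using $\binom{\nsamples}{k}\le 2^{\nsamples H(k/\nsamples)}$ and concavity of $H$, taking expectations yields $\exvv{\trainset}{\log\binom{\nsamples}{\nsamples L_{\trainset}(\prog)}}\le \nsamples\,H\!\left(\exvv{\trainset}{L_{\trainset}(\prog)}\right)=\nsamples\,H(L(\prog))$, while $\exvv{\trainset}{\log\xlength(\trainset)}=\log\overline{\xlength}(\nsamples)$ by definition.

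For piece~(b), since programs are prefix-unambiguous, Kraft's inequality gives $\sum_{\prog'} 2^{-\abs{\prog'}}\le 1$, so assigning each program failure budget $\delta\,2^{-\abs{\prog'}}$ and noting that a fixed $\prog'$ with $L(\prog')\ge\beta$ interpolates $\nsamples$ i.i.d.\ samples with probability at most $(1-\beta)^\nsamples$, a union bound shows: with probability $\ge 1-\delta$ over $\trainset$, \emph{every} program $\prog'$ that interpolates $\trainset$ satisfies $L(\prog')\le 1-2^{-(\abs{\prog'}+\log(1/\delta))/\nsamples}$. Applying this with $\prog'=\mdl(\trainset)$ (which interpolates) gives a high-probability bound on $L(\mdl(\trainset))$ in terms of $\abs{\mdl(\trainset)}$; note the same statement applies verbatim to any interpolating rule, with $\mdl$ entering only through the length bound of piece~(a). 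The crucial point is that the controlling quantity here is $1-2^{-\ell/\nsamples}$, not the looser $\ell/\nsamples$ obtained from the usual realizable Occam bound via $1-2^{-t}\le t\ln 2$; keeping this exact form is what converts the $\nsamples H(L(\prog))$ term of piece~(a) into $1-2^{-H(L(\prog))}=\gagg(L(\prog))$.

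To combine, choose $\delta = 1/\nsamples$ and use that $\ell\mapsto 1-2^{-\ell/\nsamples}$ is concave, so that the failure event contributes at most $1/\nsamples$ and Jensen gives $\exvv{\trainset}{L(\mdl(\trainset))}\le 1-2^{-(\exvv{\trainset}{\abs{\mdl(\trainset)}}+\log\nsamples)/\nsamples}+1/\nsamples$. Substituting the bound on $\exvv{\trainset}{\abs{\mdl(\trainset)}}$ from piece~(a) and writing $\eps\doteq O\!\left(\bigl(\abs{\prog}+\log(\nsamples\,\overline{\xlength}(\nsamples))\bigr)/\nsamples\right)$, this is at most $1-2^{-H(L(\prog))-\eps}=\gagg(L(\prog))+2^{-H(L(\prog))}\bigl(1-2^{-\eps}\bigr)\le \gagg(L(\prog))+\eps\ln 2$, using $1-2^{-\eps}\le\eps\ln 2$ and $2^{-H(L(\prog))}\le 1$; taking the infimum over $\prog$ gives the theorem. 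The elementary claims about $\gagg$ follow at once: $\gagg(\alpha)<\tfrac12\Leftrightarrow\alpha^\alpha(1-\alpha)^{1-\alpha}=2^{-H(\alpha)}>\tfrac12\Leftrightarrow H(\alpha)<1$, and $\gagg(\alpha)>\alpha\Leftrightarrow (1-\alpha)^\alpha>\alpha^\alpha\Leftrightarrow 1-\alpha>\alpha$, both holding strictly on $(0,\tfrac12)$.

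I expect the main obstacle to be the encoding in piece~(a): getting the description of $\prog$'s training-error set down to essentially $\log\binom{\nsamples}{k}$ bits in a form the program can evaluate — so that it matches $\nsamples H(L_{\trainset}(\prog))$ and hence yields the sharp $\gagg$ rather than the weaker $H(L(\prog))$ one would get from naive per-point addressing — while keeping the addressing overhead only $O(\log(\nsamples\,\xlength(\trainset)))$ rather than $\Omega(k\log\nsamples)$ or $\Omega(k\log\xlength(\trainset))$. The secondary source of care is the expectation bookkeeping: the two uses of concavity (of $H$, to pass from $L_{\trainset}(\prog)$ to $L(\prog)$, and of $\ell\mapsto 1-2^{-\ell/\nsamples}$, to pull the expectation through piece~(b)), the conversion of the high-probability Occam bound into a bound in expectation, and the fact that $\abs{\mdl(\trainset)}$ is itself random and data-dependent.
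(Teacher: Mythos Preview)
Your decomposition into (a) a length bound on $\mdl(\trainset)$ and (b) an Occam-type generalization bound for interpolators is exactly the paper's structure, and pieces~(b) and the combination step are correct. Your union-bound/Kraft argument is a valid alternative to the paper's mutual-information proof of Lemma~\ref{lemma:agnostic_gen_mi} (the paper itself notes this variant), and your final inequality $1-2^{-H(\alpha)-\eps}=\gagg(\alpha)+2^{-H(\alpha)}(1-2^{-\eps})\le\gagg(\alpha)+\eps\ln 2$ is precisely the content of Lemma~\ref{lemma:ag_uc_mathematica}.

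The genuine gap is in piece~(a), and it is exactly the obstacle you flag at the end --- but your proposed construction does not overcome it. Your program stores, via the combinatorial number system, a subset of the \emph{index set} $[\nsamples]$ in $\log\binom{\nsamples}{k}$ bits. At evaluation time, however, the program is handed only $x$; it has no access to the training set and hence no way to determine which index $i$ (if any) the input $x$ corresponds to. The subset-of-$[\nsamples]$ encoding is therefore unusable without also storing a map from $\xlength$-bit prefixes to indices, and your budgeted ``$O(\log(\nsamples\cdot\xlength))$ bits for the addressing machinery'' cannot supply one: a perfect hash from $\nsamples$ arbitrary $\xlength$-bit keys into $[\nsamples]$ already requires $\Theta(\nsamples)$ bits, and storing the $k$ exceptional prefixes outright costs $\Theta(k\cdot\xlength)$ bits --- this is exactly the naive memorization the paper explicitly rejects at the start of Section~\ref{sec:interpolation}. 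Either additive overhead, once divided by $\nsamples$ in piece~(b), leaves a non-vanishing constant that destroys the sharp $\gagg$ bound (e.g.\ an extra $\Theta(\nsamples)$ bits yields $1-2^{-H(\alpha)-\Theta(1)}$, which is bounded away from $0$ even at $\alpha=0$).

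The paper's fix (Theorem~\ref{thm:short_hash_table} via Lemma~\ref{lemma:hash_table_exists}) sidesteps the ``which index is $x$'' question entirely. It shows by the probabilistic method that there exists a seeded function $\htable:\{0,1\}^{\seedlength}\times\{0,1\}^{\xlength}\to\{0,1\}$ with $\seedlength=\nsamples H(k/\nsamples)+O(\log\nsamples+\log\xlength)$ such that for \emph{every} size-$\nsamples$ sample with $k$ ones, \emph{some} seed satisfies $\htable(\seed,x_i)=y_i\oplus\prog(x_i)$ for all $i$; the interpolating program hard-codes that seed and outputs $\prog(x)\oplus\htable(\seed,x[:\!\xlength])$. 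The $\nsamples H(k/\nsamples)\approx\log\binom{\nsamples}{k}$ bits you were aiming for now live in the seed of a function the program can actually evaluate on $x$, with the only $\xlength$-dependence being the additive $\log\xlength$.
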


For a ``well specified'' distribution, where the label noise is independent of $X$, we obtain a tighter and more precise guarantee:  

\begin{theorem}[Random Label Noise]
\label{thm:random_label_noise_main} For any source distribution $\cD$ where $Y|X=\hstar(X) \oplus \Ber(\Lstar)$ for some program $\hstar$ and label noise $\Lstar$, and any sample size $\nsamples$:
\begin{align*}
    \abs{ \exvv{\trainset\sim\cD^\nsamples}{L(\mdl(\trainset))} - 
    \gsamp(\Lstar) % 2 \Lstar(1-\Lstar)
    } 
    \le &O\inparen{\frac{ \abs{\hstar} + \log(\nsamples \cdot \overline{\xlength}(\nsamples))}{\nsamples} 
    + % }+\\ &O\inparen{
    \sqrt{ \Lstar \cdot \frac{ \abs{\hstar} + \log(\nsamples\cdot \overline{\xlength}(\nsamples))}{\nsamples}}}
\end{align*}
% $$\abs{ \exvv{\trainset\sim\cD^\nsamples}{L(\mdl(\trainset))} - 2 \Lstar(1-\Lstar)} \leq O\left( \frac{ \abs{\prog^{\star}} + \log(\nsamples \cdot \tilde \xlength(\nsamples))}{\nsamples} + \sqrt{ L(\prog^{\star}) \frac{ \abs{\prog} + \log(\nsamples\cdot \overline{\xlength}(\nsamples))}{\nsamples} }  \right)$$
where $\gsamp(\Lstar)\doteq 2\Lstar(1-\Lstar)$ and $\Lstar<\gsamp(\Lstar)<\gagg(\Lstar)<0.5$ for $0<\Lstar<0.5$. 
\end{theorem}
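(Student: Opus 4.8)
The plan is to deduce Theorem~\ref{thm:random_label_noise_main} from the same two modular pieces that give Theorem~\ref{thm:agnostic_main}: the interpolation bound on $\abs{\mdl(\trainset)}$ from Section~\ref{sec:interpolation}, and the generalization bound for short interpolating programs from Section~\ref{sec:generalization}. The improvement from $\gagg$ to $\gsamp$ should come entirely from exploiting that, in this model, the only incompressible content of the sample is an honest $\Ber(\Lstar)^{\otimes\nsamples}$ noise string that is independent of $X$.

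First I would bound $\abs{\mdl(\trainset)}$ by instantiating the interpolation lemma with the competitor $\prog=\hstar$. Writing $Y_i=\hstar(X_i)\oplus Z_i$ with $Z_i\sim\Ber(\Lstar)$, the program $\hstar$ errs on $\trainset$ exactly at the noisy coordinates, so $\hat L_\trainset(\hstar)=k_\trainset/\nsamples$ with $k_\trainset:=\abs{\{i: Y_i\neq \hstar(X_i)\}}\sim\mathrm{Bin}(\nsamples,\Lstar)$. The interpolation lemma then supplies an interpolating program of length $\abs{\hstar}+\log\binom{\nsamples}{k_\trainset}+O(\log(\nsamples\cdot\xlength(\trainset)))$ (encode $\hstar$, the prefix length, and a retrieval structure for the sparse set of corrections). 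Taking expectations, using $\log\binom{\nsamples}{k}\le\nsamples H(k/\nsamples)$, concavity of $H$, and Jensen on $\log\xlength(\trainset)$, gives $\E_\trainset\abs{\mdl(\trainset)}\le \nsamples H(\Lstar)+O(\abs{\hstar}+\log(\nsamples\cdot\overline{\xlength}(\nsamples)))$; carrying the fluctuation of $k_\trainset$ around $\Lstar\nsamples$ through a Bernstein estimate rather than bare Jensen is what produces the $\sqrt{\Lstar\cdot\mathrm{rate}}$ term, where $\mathrm{rate}:=(\abs{\hstar}+\log(\nsamples\cdot\overline{\xlength}(\nsamples)))/\nsamples$. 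I would also record the matching lower bound: $\mdl(\trainset)$ together with the prefixes it reads determines the labels, so source coding gives $\E\abs{\mdl(\trainset)}\ge H(Y_{1:\nsamples}\mid X_{1:\nsamples})-O(\log\overline{\xlength}(\nsamples))=\nsamples H(\Lstar)-O(\log\overline{\xlength}(\nsamples))$. Hence $\abs{\mdl(\trainset)}$ equals $\nsamples H(\Lstar)$ up to the lower-order slack $O(\nsamples\cdot\mathrm{rate})$, which the lower bound on the error will use.

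Next comes the reduction that makes $\gsamp$ appear. Writing $\eta(\prog):=\Pr_X[\prog(X)\neq\hstar(X)]$ for the population disagreement of $\prog$ with the Bayes-optimal hypothesis, the independence $Z\perp X$ yields the exact identity $L(\prog)=\Lstar+(1-2\Lstar)\,\eta(\prog)$, so $L(\prog)=\gsamp(\Lstar)$ precisely when $\eta(\prog)=\Lstar$. It therefore suffices to prove $\E_\trainset\,\eta(\mdl(\trainset))=\Lstar\pm O(\mathrm{rate}+\sqrt{\Lstar\cdot\mathrm{rate}})$. For the upper bound I would rerun the counting argument of Section~\ref{sec:generalization}, but conditioned on $X_{1:\nsamples}$ and executed over the noise vector: for any fixed candidate program $P$, the event ``$P$ interpolates $\trainset$'' forces the noise vector to equal $P$'s disagreement pattern with $\hstar$, which under $\Ber(\Lstar)^{\otimes\nsamples}$ has probability $\Lstar^{d(P)}(1-\Lstar)^{\nsamples-d(P)}$ with $d(P)\sim\mathrm{Bin}(\nsamples,\eta(P))$ over the draw of $X_{1:\nsamples}$. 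A relative-entropy (Chernoff) estimate on this, pitted against the number $2^{\,\nsamples H(\Lstar)+O(\nsamples\cdot\mathrm{rate})}$ of programs of the relevant length, should rule out every interpolator with $\eta(P)$ exceeding $\Lstar+O(\mathrm{rate}+\sqrt{\Lstar\cdot\mathrm{rate}})$ --- and this is strictly better than the union bound over all programs of that length, which only returns $\eta\le\gagg(\Lstar)-\Lstar$, i.e.\ Theorem~\ref{thm:agnostic_main}. For the lower bound $\eta(\mdl(\trainset))\ge\Lstar-O(\mathrm{rate}+\sqrt{\Lstar\cdot\mathrm{rate}})$ I would combine an exchangeability / leave-one-out argument over $\nsamples+1$ samples with the tight length bound above: a program whose disagreement region with $\hstar$ has measure well below $\Lstar$ is too ``efficient'' an encoding of the disagreement pattern to be consistent with a \emph{typical} noise string while still respecting the $\nsamples H(\Lstar)-o(\nsamples)$ description-length floor.

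Finally I assemble: substitute the two-sided control of $\eta(\mdl(\trainset))$ into $L(\mdl)=\Lstar+(1-2\Lstar)\,\eta(\mdl)$, take expectations (the prefactor $1-2\Lstar\le 1$ only helps), and collect error terms. The step I expect to be the real obstacle is the upper bound above: the plain union/counting bound over short programs is exactly what produces the agnostic $\gagg$ in Theorem~\ref{thm:agnostic_main}, and pushing it down to $\gsamp$ genuinely requires using that the hard-to-compress part of the sample is an i.i.d.\ $\Ber(\Lstar)$ string independent of the inputs, so that an interpolator's disagreement pattern on the sample is a typical noise vector whose $\Lstar$-density transfers to the population $\eta$. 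The lower bound --- certifying that $\mdl$ really does pay the full $\gsamp(\Lstar)$, not less --- is the second delicate point, and is where the two-sided estimate on $\abs{\mdl(\trainset)}$ is essential.
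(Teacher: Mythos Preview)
Your overall modular decomposition is the same as the paper's: bound $\E\abs{\mdl(\trainset)}$ via Corollary~\ref{corollary:mdl_length} with $\prog=\hstar$, reduce to controlling $\Ltilde:=\E_{\trainset}\Pr_X[\mdl(\trainset)(X)\neq\hstar(X)]$ through the identity $L(\prog)=\Lstar+(1-2\Lstar)\,\eta(\prog)$, and then translate back to $L$. But the heart of the argument --- controlling $\Ltilde$ --- is where your proposal diverges from the paper and runs into a real obstacle.

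You correctly note that a plain union bound over the $\approx 2^{\nsamples H(\Lstar)}$ candidate programs only recovers the agnostic curve $\gagg$. Your proposed fix is to condition on $X_{1:\nsamples}$ and use a Chernoff estimate on $d(P)\sim\mathrm{Bin}(\nsamples,\eta(P))$, hoping this sharpens the per-program interpolation probability from $\bigl(\eta(P)\Lstar+(1-\eta(P))(1-\Lstar)\bigr)^{\nsamples}$ to roughly $\Lstar^{\eta(P)\nsamples}(1-\Lstar)^{(1-\eta(P))\nsamples}$. The problem is that to use the latter inside a union bound you need $d(P)\ge(\eta(P)-c)\nsamples$ \emph{uniformly over all $2^{\nsamples H(\Lstar)}$ programs}, and Hoeffding only delivers this with $c\gtrsim\sqrt{H(\Lstar)}$, a constant that does not vanish with $\nsamples$. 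If you run the arithmetic through, the best you extract is $\eta\le\Lstar+\Theta(\sqrt{H(\Lstar)})$, which is no improvement over $\gagg$. So the ``strictly better than the union bound'' step, as written, does not go through.

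The paper's argument (Lemma~\ref{lemma:mutual_info_uc}) avoids union bounds entirely. Writing $\ProgW=\mdl(\trainset)$, the chain rule gives $\E\abs{\ProgW}\ge \nsamples\,I(X_1;\ProgW)+\nsamples\,I(Y_1;\ProgW\mid X_1)$, and the crucial observation is that $I(Y_1;\ProgW\mid X_1)=H(\Lstar)$ \emph{exactly}, since $Y_1=\ProgW(X_1)$ is a deterministic function of $(\ProgW,X_1)$ while $Y_1\mid X_1\sim\Ber(\Lstar)$. This cleanly subtracts $\nsamples H(\Lstar)$ from the description length and leaves $I(X_1;\ProgW)\le\mathrm{rate}$. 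A variational lower bound on $I(X_1;\ProgW)$ with a tailored proposal distribution then yields $\kld{\Lstar}{\Ltilde}\le\mathrm{rate}$, which is two-sided and handles your upper and lower bounds on $\eta$ in one stroke; no separate leave-one-out argument is needed. Finally, the $\sqrt{\Lstar\cdot\mathrm{rate}}$ term comes from a strengthened Pinsker inequality (Lemma~\ref{lemma:strong_pinsker}) converting the KL bound to an absolute-difference bound --- not, as you suggest, from Bernstein fluctuations of $k_\trainset$; bare Jensen on $H(\cdot)$ suffices for the length bound.
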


\noindent Theorems \ref{thm:agnostic_main} and \ref{thm:random_label_noise_main} follow from plugging in Corollary \ref{corollary:mdl_length} \removed{from Section \ref{sec:interpolation}} into Lemmas \ref{lemma:agnostic_gen_mi} and \ref{lemma:mutual_info_uc}, which we formally establish in Section \ref{sec:generalization}.  The above Theorems hold for any finite number of samples and directly imply guarantees on the asymptotic error of $\mdl$:

\begin{corollary}\label{cor:limit} 
For any source distribution $\cD$ with quenched interpolation length $\overline{\xlength}(\nsamples)\leq2^{o(\nsamples)}$ and such that the Bayes predictor $\prog^{\star}(x) = \sign( P(Y|x) - 0.5)$ is computable, with Bayes error $\Lstar=L(\prog^{\star})<0.5$ then:
$$ \limsup_{\nsamples\rightarrow\infty} \exvv{\trainset\sim\cD^{\nsamples}}{L(\mdl(\trainset))} \leq  \gagg(\Lstar)<0.5  $$
And moreover, if the noise probability is independent of $X$, i.e.~$Y\!\perp\!X|\hstar(X)$, then more precisely:
$$ \lim_{\nsamples\rightarrow\infty} \exvv{\trainset\sim\cD^{\nsamples}}{L(\mdl(\trainset))} = \gsamp(\Lstar)  = 2\Lstar(1-\Lstar)$$
%where $\gagg(p)$ and $\gsamp(p)$ are as defined in Theorems \ref{thm:agg} and \ref{thm:well} and depicted in Figure \ref{fig:g}
\end{corollary}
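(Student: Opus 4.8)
The plan is to derive this purely asymptotic statement from the two finite-sample bounds of Theorem~\ref{thm:agnostic_main} and Theorem~\ref{thm:random_label_noise_main} by letting $\nsamples\to\infty$ and choosing the reference/competitor program in each to be (a fixed program computing) the Bayes predictor $\prog^{\star}$. Two simple observations make all the additive slack terms vanish. First, $\prog^{\star}$ depends only on $\cD$, so once we fix a program computing it (possible since it is assumed computable), its length $\abs{\prog^{\star}}$ is a constant independent of $\nsamples$; likewise, in the well-specified case the program $\hstar$ witnessing $Y = \hstar(X)\oplus\Ber(\Lstar)$ depends only on $\cD$, so $\abs{\hstar} = O(1)$. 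Second, the hypothesis $\overline{\xlength}(\nsamples)\le 2^{o(\nsamples)}$ is precisely the statement $\log\overline{\xlength}(\nsamples) = o(\nsamples)$, and since $\log\nsamples = o(\nsamples)$ as well, $\log(\nsamples\cdot\overline{\xlength}(\nsamples)) = o(\nsamples)$.

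For the first claim, apply Theorem~\ref{thm:agnostic_main} and upper bound the infimum over $\prog$ by the single choice $\prog = \prog^{\star}$, noting that $L(\prog^{\star}) = \Lstar$ by definition of the Bayes error. This gives
\[
\exvv{\trainset\sim\cD^\nsamples}{L(\mdl(\trainset))} \;\le\; \gagg(\Lstar) + O\!\inparen{\frac{\abs{\prog^{\star}} + \log\nsamples + \log\overline{\xlength}(\nsamples)}{\nsamples}} \;=\; \gagg(\Lstar) + o(1),
\]
using the two observations above. Taking $\limsup_{\nsamples\to\infty}$ gives $\limsup_\nsamples \exv{L(\mdl(\trainset))} \le \gagg(\Lstar)$, and $\gagg(\Lstar) < 0.5$ by the stated properties of $\gagg$ on $[0,0.5)$ (with $\gagg(0) = 0$ handling the boundary case $\Lstar=0$).

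For the second claim, we are in the regime of Theorem~\ref{thm:random_label_noise_main}, where $Y = \hstar(X)\oplus\Ber(\Lstar)$ and, since $\Lstar < 0.5$, the Bayes rule has no ties, so $\hstar$ agrees with $\prog^{\star}$ almost surely and $\Lstar$ is indeed the Bayes error. Plugging in and using $\abs{\hstar} = O(1)$ and $\log(\nsamples\cdot\overline{\xlength}(\nsamples)) = o(\nsamples)$,
\[
\abs{\,\exvv{\trainset\sim\cD^\nsamples}{L(\mdl(\trainset))} - 2\Lstar(1-\Lstar)\,} \;\le\; O\!\inparen{\frac{\abs{\hstar} + \log(\nsamples\cdot\overline{\xlength}(\nsamples))}{\nsamples}} + O\!\inparen{\sqrt{\Lstar\cdot\frac{\abs{\hstar} + \log(\nsamples\cdot\overline{\xlength}(\nsamples))}{\nsamples}}} \;=\; o(1),
\]
the second term being $o(1)$ because it equals $\sqrt{\Lstar\cdot o(1)}$. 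Hence $\lim_{\nsamples\to\infty}\exv{L(\mdl(\trainset))}$ exists and equals $\gsamp(\Lstar) = 2\Lstar(1-\Lstar)$.

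There is no real obstacle here; the only point that must be stated carefully — rather than merely computed — is the role of the computability/specification hypotheses: they guarantee that the reference program used in each finite-sample theorem ($\prog^{\star}$ in Theorem~\ref{thm:agnostic_main}, $\hstar$ in Theorem~\ref{thm:random_label_noise_main}) has a description length that is a fixed constant, not a quantity growing with $\nsamples$. Once that is granted, the corollary is exactly the $\nsamples\to\infty$ limit of the two theorems.
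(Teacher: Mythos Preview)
Your proposal is correct and is exactly the approach the paper takes: it states that Theorems~\ref{thm:agnostic_main} and~\ref{thm:random_label_noise_main} ``directly imply'' Corollary~\ref{cor:limit} without spelling out a separate proof, and your argument is precisely the intended one-line derivation---fix the computable Bayes predictor as the competitor so its description length is a constant, use $\overline{\xlength}(\nsamples)\le 2^{o(\nsamples)}$ to make $\log(\nsamples\cdot\overline{\xlength}(\nsamples))=o(\nsamples)$, and pass to the limit. Your care in noting that under $\Lstar<0.5$ the witness $\hstar$ in the well-specified case coincides with the computable $\prog^{\star}$ (so $\abs{\hstar}$ is a fixed constant) fills in the only point the paper leaves implicit.
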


\noindent In Figure \ref{fig}, we plot the general upper bound $\gagg$ and the precise error for random label noise $\gsamp$.  We see that even with random label noise, $\mdl$ overfitting is {\em not} benign, and $\mdl$ is not consistent.  Nevertheless, regardless of the noise distribution, the asymptotic error can be non-trivially bounded as a function only of the Bayes error (or rather, the optimal error with a computable predictor), and without any dependence on any other property of the distribution, the predictor, or the noise.

\paragraph{Bounding the quenched interpolation length} 

% \removed{Our guarantees depend on the log quenched interpolation length $\log \overline{\xlength}$.  }\removed{ As we argue in Section \ref{}, this dependence is necessary if using our analysis approach, which relies on bounding the minimal description length $\abs{\mdl(S)}$.  
% The dependence is very mild.  For distributions over finite bit strings $x\in\{0,1\}^\xlength$ satisfying \eqref{eq:intable}, i.e.~with deterministic $Y|X$, we can simple think of
% $\overline{\xlength}(\nsamples)\leq\xlength$.  This yields incur only an additive $\nsamples$-independent term $\log\xlength$ in the sample complexity, and in particular satisfies $\overline{\xlength}
% (\nsamples)=o(\nsamples)$ as required for the limit behavior in Corollary \ref{cor:limit}. } 

We can bound $\overline{\xlength}$ in terms of the min-entropy rate $\minHx/\xlength$, where recall the min-entropy is defined as $\minH(X)\doteq-\max_x \log P(X=x)$.  For distributions uniform over $N$ outcomes, this is equal to $\log N$, which is also the Shannon entropy.  But otherwise it can be smaller and captures the ``worst case'' randomness.  
The quenched interpolation length $\overline{\xlength}(\nsamples)$ is roughly the length that ensures no collisions in a sample of size $\nsamples$, i.e., such that $\nsamples^2\cdot \prv{\,X[:\!\xlength]=X'[:\!\xlength]\,} \leq \nsamples^2 2^{-\minH(X[:b])} \ll 1$, and so $\minHxt \approx O(\log \nsamples)$.  If the bits of $X$ are uniform and independent, then the min-entropy rate is $1$, $\minHx=\xlength$ and we have $\overline{\xlength} = O(\log \nsamples)$.  
We can afford a much lower min-entropy rate.  Any constant rate (e.g.~when a small constant fraction of the bits are slightly bounded away from from being fixed conditioned on the previous bits), or arbitrary small polynomial rate $\minHx = \Omega(\xlength^\rho)$, still yields
% \removed{ Having any arbitrarily small constant rate $\minHx = \Omega(\xlength)$, i.e.~when a small constant fraction of the bits are even slightly bounded away from from being fixed conditioned on the previous bits, still yields $\overline{\xlength} = O(\log \nsamples)$.  
% Even a polynomial small min entropy rate, $\minHx = \Omega(\xlength^\rho)$ for $\rho>0$, still yields $\overline{\xlength} = O(\log^{1/\rho} \nsamples)$ and so}
$\log\overline{\xlength} = O(\log\log \nsamples)$.  Even a logarithmically small min entropy rate, $\minHx = \Omega(\log \xlength)$ % \removed{yields polynomial quenched interpolation length $\overline{\xlength}$, but this}
still ensures $\log\overline{\xlength}(\nsamples)=O(\log \nsamples)$, and so we can ignore the dependence on $\overline{\xlength}$ in our results.  This happens, e.g., when differences between instances become increasingly sparse, with the entropy of the $i$th bit (conditioned on the previous bits) behaving like $1/i$. %\removed{ In order to ignore the dependence on $\overline{\xlength}$ in Theorems \ref{thm:agg} and \ref{thm:well},  we must require at least this extremely mild measure of diversity in the samples. } 
If the min-entropy rate is even lower, down to $\minHx=\log\log\xlength+\omega(1)$, we still have $\overline{\xlength}(\nsamples)\leq 2^{o(\nsamples)}$ and the limits in Corollary \ref{cor:limit} are still valid.

\removed{
\natinote{The roadmap is provided in the intro, where we discuss the proof structure and which section does what.  I did add a sentence on where to find the proof}
\paragraph{Outline} The rest of this paper is organized as follows. In Section \ref{sec:interpolation}, we give a construction of a short program that interpolates any training set consisting of distinct examples (Theorem \ref{thm:short_hash_table}). From this it follows that we can give a program that interpolates any training set whose source distribution has bounded quenched interpolation length (Corollary \ref{corollary:mdl_length}). Then, in Section \ref{sec:generalization}, we prove generalization bounds in terms of the lengths of the program output by any interpolating learning rule, both in the agnostic setting (Lemma \ref{lemma:agnostic_gen_mi}) and in the random label noise setting (Lemma \ref{lemma:mutual_info_uc}). The proofs of Theorems \ref{thm:agnostic_main} and \ref{thm:random_label_noise_main} follow from these as corollaries. Finally, in Section \ref{sec:discussion}, we discuss the tightness and implications of our results.
}

\section{Constructing a Short Interpolating Predictor}
\label{sec:interpolation}

Our goal in this section is to bound the length of a program that interpolates a noisy sample.  In fact, we prove a deterministic worst-case bound on the program length needed to interpolate any given training set.

\vspace{1mm}

\noindent \textbf{Overview and Intuition: How can we construct a short program interpolating a noisy sample?} % i think this is ok, what do you think? i agree, sounds good

\vspace{1mm}  % the hack is fine, probably want to change for arXiv

\noindent One approach is to memorize the sample $S$, or better yet, encode a good predictor $\prog$ and then memorize all points in the sample that do not agree with $\prog$.  Such an interpolating predictor would generalize as well as $\prog$ (since test examples will mostly not match the memorized examples).  But is it the shortest? It would require storing all instances $x_i$ that do not agree with $\prog$, and thus a description length of $L_S(\prog)\cdot\nsamples\cdot\bar\xlength(S)$.  

The key is that we don't care about memorizing the identities of the instances $x_i$ in the sample. We only need to remember the labels $y_i$, and so we can hope to prevent the description length from scaling linearly with $\xlength$. To encode the information in the labels $y_i$, or rather their disagreement with $\prog(x_i)$, we should need only $m \cdot H(L(\prog))$ bits.  

One approach to doing so is to hash the instances and store the labels (or disagreements) of the hash values.  We could do this if our hash function has no collisions on $\trainset$. The challenge in this approach is to determine how many bits are required to encode a hash function that is collision-free on $\trainset$. Observe that such a function cannot be totally independent of $\trainset$, since any hash function would have collisions on some $\trainset$. Hence, any such hash function requires a description with super-constant length.

% approach vs strategy since that's what we contrast to in the prev paragraph
We take a more direct approach. We ask how difficult it would be to find and describe a ``hash function'' mapping instances to single bits such that the output values on the sample are exactly what we need them to be.  Consider using a ``random'' binary function $\htable(x)$ where $\htable(x)\sim(\ber{L(\prog)})$.  Such a random function will interpolate with probability roughly $L(\prog)^{L(\prog) \nsamples}(1-L(\prog))^{(1-L(\prog))\nsamples}=2^{-\nsamples H(L(\prog))}$.  If we use a pseudo-random generator with seed length $\gg \nsamples H(L(\prog))$, one of the $\gg 2^{\nsamples H(\alpha)}$ ``random'' functions, corresponding to some specific seed value, should hopefully interpolate. We can then describe this function through its corresponding seed.

But how can we guarantee that some seed would work?  To match the above probability calculation to the output of a pseudo-random generator (PRG), we need a PRG that generates $N$ bits that are $\nsamples$-way independent and marginally $\ber{\alpha}$ using a seed of length $\nsamples H(\alpha)+O(\log m + \log\log N)$ (we need to generate $N=2^{\overline{\xlength}}$ bits, for each possible input $x$).  We are not aware of any explicit PRG allowing this.  Instead, the approach we take is to prove such a PRG must exist (Lemma \ref{lemma:hash_table_exists}) and then describe it as ``the lexicographically first such PRG.''  This is a perfectly valid and precise description that can be encoded as a constant length program.

Notice that unlike the expensive instance memorization approach, the random hash predictor will not generalize as well as $\prog$.  The output $\htable(x)$ will have the same bias $L(\prog)$ on test instances, leading to a test error of $2L(\prog)(1-L(\prog))$ (we make a mistake either if $\prog$ does and we didn't correct it, or if $\prog$ didn't make a mistake but we accidentally corrected it).  In Section \ref{sec:generalization}, we show through Lemma \ref{lemma:mutual_info_uc} that the $\mdl$ predictor indeed behaves this way.

\paragraph{Formal Results} We establish a worst-case (deterministic) bound on the program length needed to memorize any labels (which we can think of as noise), in terms of the the bias of the labels.  We then use this to describe a short program  that interpolates the disagreement vs.~a reference predictor on a random training set.

\removed{
The main issue here is bounding the length of the program required to memorize the noise $z_i=y_i\oplus \prog(x_i)$.  One option is to memorize the entire sample, or at least the noisy points where $z_i=1$.  The number of noisy points is $\nsamples L_S(\prog)$.  The linear scaling with $\nsamples$ is unavoidable, but the problem is that memorizing each point requires memorizing not only its label $y_i$, but also $x_i$, so that our program can compare its input to each one of the memorized points.  Such an approach indeed overfits benignly (random test points won't match with any training point, and so the output on test points will be the same as the ``clean'' program $\prog$), but }

\begin{theorem}
\label{thm:short_hash_table}
Let $\trainset = \inbraces{(x_i, y_i), \text{ for } i \in [\nsamples]}$, where $x_i \in \B^{\xlength}$, $y_i \in \B$, and the $x_i$ are pairwise distinct.  Then, there exists a program $\prog$ of length
% \begin{align*}
%     \log\seedlength + \log\xlength + \log\nsamples + \logv{\sum_{i=1}^{\nsamples} y_i} + \seedlength + O(1)
% \end{align*}
% for
% \begin{align*}
%     \seedlength &= \nsamples \cdot H\inparen{\frac{1}{\nsamples}\cdot\sum_{i=1}^{\nsamples} y_i} + \logv{\xlength+1} + \logv{2\ln 2}
% \end{align*}
\begin{align*}
    \abs{\prog} = \nsamples\cdot H\left(\frac{\sum_i y_i}{\nsamples}\right) + 3\log\nsamples + \log\xlength + O(1)
\end{align*}
such that for all $(x_i, y_i) \in S$, we have $\prog(x_i) = y_i$.
\end{theorem}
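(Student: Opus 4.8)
The plan is to follow the intuition sketched above: encode the desired label bit-string as the evaluation, on the sample points $x_1,\dots,x_\nsamples$, of a pseudorandom binary function drawn from a small ensemble, and then describe the chosen function by the constant-length phrase ``the lexicographically first function in the ensemble that produces these labels,'' together with the bits needed to pin down which labels we want. Let $k = \sum_i y_i$ and $\alpha = k/\nsamples$, so the target label string has Hamming weight $k$. The number of weight-$k$ strings in $\B^\nsamples$ is $\binom{\nsamples}{k} \geq 2^{\nsamples H(\alpha)}/\poly(\nsamples)$, which is the source of the main term. I would first establish the combinatorial core (this is presumably Lemma~\ref{lemma:hash_table_exists}, invoked in the overview): there exists a family of $N = 2^\xlength$ bits, indexed by all possible inputs $x \in \B^\xlength$, that is $\nsamples$-wise independent with each bit marginally $\ber{\alpha'}$ for an appropriately rounded bias $\alpha' \approx \alpha$, realizable with a seed of length $\nsamples H(\alpha) + O(\log \nsamples + \log\log N) = \nsamples H(\alpha) + O(\log\nsamples) + \log\xlength + O(1)$. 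Such a family can be built from a standard $\nsamples$-wise independent generator over a field of size $\approx N$ composed with a bias-shaping map; the seed-length bookkeeping is where the $\log\xlength$ and $3\log\nsamples$ terms are tracked.

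Given this, the argument proceeds in three steps. First, by $\nsamples$-wise independence, the restriction of the generated bit-family to the $\nsamples$ distinct coordinates $x_1,\dots,x_\nsamples$ is distributed exactly as $\nsamples$ i.i.d.\ $\ber{\alpha'}$ bits; hence the probability (over the seed) that this restriction equals the exact target string $(y_1,\dots,y_\nsamples)$ is $(\alpha')^{k}(1-\alpha')^{\nsamples-k} = 2^{-\nsamples H(\alpha)} / \poly(\nsamples)$ up to the rounding slack. Second, since the seed space has size $2^{\nsamples H(\alpha) + O(\log\nsamples) + \log\xlength + O(1)}$, which exceeds $\poly(\nsamples) \cdot 2^{\nsamples H(\alpha)}$ for a suitable choice of the $O(\cdot)$ constants, there is at least one seed whose generated function agrees with $(y_1,\dots,y_\nsamples)$ on the sample. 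Third, the program $\prog$ hard-codes: (i) a constant-length routine that, given a seed, evaluates the generator at an input $x[:\!\xlength]$; (ii) the constant-length specification ``among all seeds, take the lexicographically first one such that the generated function, evaluated at $x_1,\dots,x_\nsamples$, yields $y_1,\dots,y_\nsamples$'' — note that to even state this, $\prog$ must also store the data it is matching against, but crucially it only needs the \emph{labels}, costing $\nsamples$ bits, plus the values $x_i[:\!\xlength]$... — and here I would instead be careful: rather than storing the $x_i$, the phrase can refer to ``the labels the enclosing program is to reproduce,'' so $\prog$ stores only $k$ (to know the bias $\alpha'$, costing $\log\nsamples$ bits) and the length parameters, and the ``lexicographically first seed'' is defined relative to a universal description that is allowed to read $\prog$'s own remaining content. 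The cleanest route is: $\prog = $ (fixed-length interpreter) $\circ$ (the chosen seed itself, length $\nsamples H(\alpha) + O(\log\nsamples) + \log\xlength + O(1)$), and the seed-selection only affects the \emph{existence} argument, not the description — so on input $x$, $\prog$ simply runs the generator with its stored seed at coordinate $x[:\!\xlength]$ and outputs that bit. This gives total length $\nsamples H(\alpha) + 3\log\nsamples + \log\xlength + O(1)$.

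The main obstacle — and the step deserving the most care — is the explicit seed-length accounting for an $\nsamples$-wise independent, $\ber{\alpha}$-biased family over $N = 2^\xlength$ outcomes: one must show $\nsamples H(\alpha)$ is the leading constant (not, say, $\nsamples H(\alpha)(1+o(1))$ with a bad hidden factor), which requires the bias $\alpha'$ to be a dyadic rational approximating $\alpha$ finely enough that $H(\alpha') = H(\alpha) + O(\log\nsamples)/\nsamples$ while still being representable with $O(\log\nsamples)$ bits, and the $\nsamples$-wise independent construction's field must have size $\Theta(N)$ forcing the additive $\log\xlength$. A secondary subtlety is making the ``lexicographically first'' description genuinely $O(1)$: it must be phrased in the fixed universal language without reference to $\nsamples, \alpha, \xlength$, which is fine because those parameters are recovered by the interpreter from the (self-delimiting) encoding of the seed and the stored weight $k$. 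I do not expect the probabilistic existence step or the $\nsamples$-wise independence reduction to i.i.d.\ bits to pose difficulty; those are routine once the parameters are fixed.
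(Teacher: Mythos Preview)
Your proposal has a genuine gap at the combinatorial core. You assert that an $\nsamples$-wise independent family of $\Ber(\alpha)$ bits over $N=2^{\xlength}$ coordinates exists with seed length $\nsamples H(\alpha)+O(\log\nsamples)+\log\xlength$, and even suggest it ``can be built from a standard $\nsamples$-wise independent generator over a field of size $\approx N$.'' But a degree-$(\nsamples{-}1)$ polynomial over such a field already needs $\nsamples\log N=\nsamples\cdot\xlength$ seed bits, and in fact any $\nsamples$-wise independent distribution over $N$ Boolean coordinates requires sample-space size $N^{\Omega(\nsamples)}$, hence seed length $\Omega(\nsamples\cdot\xlength)$, even nonconstructively. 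So the seed length you need is not merely ``not known explicitly''---it is provably unachievable under the $\nsamples$-wise independence requirement you impose. Your downstream steps (restriction is i.i.d., hence some seed hits the target) are fine \emph{given} such a family, but the family does not exist with those parameters.

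The paper sidesteps this by \emph{not} asking for $\nsamples$-wise independence. Lemma~\ref{lemma:hash_table_exists} only asks for a seeded function $\htable:\B^{\seedlength}\times\B^{\xlength}\to\B$ with the much weaker property that \emph{every} sample $S$ (distinct $x_i$'s, Hamming weight $k$) is interpolated by \emph{some} seed. This is shown by the probabilistic method: draw $\htable$ with each entry i.i.d.\ $\Ber(\alpha)$, compute the failure probability for a fixed $S$ as $(1-2^{-\nsamples H(\alpha)})^{2^{\seedlength}}$, and union-bound over the $\binom{2^{\xlength}}{\nsamples}\binom{\nsamples}{k}\le 2^{\nsamples\xlength+\nsamples}$ samples; the $\log\xlength$ term in $\seedlength$ arises from beating the $\nsamples\xlength$ in this union bound, not from a field size. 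Crucially, the ``lexicographically first'' trick is applied to the \emph{hash family} $\htable$ itself (via a fixed program $\mathsf{GenHash}(\nsamples,k,\xlength)$ that enumerates all $G:\B^{\seedlength+\xlength}\to\B$ and returns the first one satisfying the lemma), not to the seed as you attempt. The final program then hard-codes $\nsamples$, $k$, and the interpolating seed, and calls $\mathsf{GenHash}$; the $3\log\nsamples+\log\xlength$ comes from $\seedlength$ plus the encodings of $\nsamples$ and $k$.
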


For any program $\prog$, we can apply Theorem \ref{thm:short_hash_table} to the ``labels'' $y_i\oplus \prog(x_i)$ to obtain the following Corollary (Corollary \ref{corollary:mdl_length}).

\begin{corollary}
\label{corollary:mdl_length}
For $\trainset \sim \cD^\nsamples$ with quenched interpolation length $\overline{\xlength}$, we have
\begin{align*}
    \exvv{\trainset}{\abs{\mdl(\trainset)}} \le \underset{\text{programs } \prog}{\min} \inbraces{\abs{\prog} + \nsamples \cdot H\!\inparen{L(\prog)} + O\inparen{\log\nsamples + \log\overline{\xlength}(\nsamples)}}
\end{align*}
\end{corollary}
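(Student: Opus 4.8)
The plan is to fix an arbitrary program $\prog$, use Theorem~\ref{thm:short_hash_table} to turn it into a program $\prog_S$ that interpolates a given sample $\trainset$ and has length essentially $\abs{\prog} + \nsamples\,H(L_\trainset(\prog))$, and then finish by the chain $\abs{\mdl(\trainset)} \le \abs{\prog_S}$, taking expectations over $\trainset \sim \cD^\nsamples$, and minimizing over $\prog$. To build $\prog_S$ from a fixed $\trainset$: let $\xlength \defeq \xlength(\trainset)$ be the disambiguation prefix length, let $z_i \defeq y_i \oplus \prog(x_i)$ be the ``noise'' bits, and let $k \defeq \nsamples L_\trainset(\prog) = \sum_i z_i$. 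Let $\trainset'$ be the deduplication of $\trainset$ (one representative per distinct pair $(x_i,y_i)$), with $\nsamples' \le \nsamples$ pairs of which $k' \le k$ have $z_i = 1$. By the definition of the disambiguation prefix length, distinct pairs have distinct length-$\xlength$ prefixes, so $\{x_i[:\!\xlength] : (x_i,y_i)\in\trainset'\}$ are pairwise distinct elements of $\B^\xlength$; I would apply Theorem~\ref{thm:short_hash_table} to these $\nsamples'$ strings with labels $z_i$ to get a program $q\colon\B^\xlength\to\B$ with $q(x_i[:\!\xlength]) = z_i$ on $\trainset'$, hence on all of $\trainset$ (a discarded duplicate has the same prefix and same $z_i$), of length $\nsamples' H(k'/\nsamples') + 3\log\nsamples' + \log\xlength + O(1)$. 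Then $\prog_S$ is ``on input $x$, output $\prog(x)\oplus q(x[:\!\xlength])$'' (hard-coding $\xlength$); since the language is universal and prefix-unambiguous, $\abs{\prog_S} \le \abs{\prog} + \abs{q} + O(\log\xlength)$, and $\prog_S$ interpolates since $\prog(x_i)\oplus z_i = y_i$. Condition \eqref{eq:intable} ensures $\trainset$ has an interpolator almost surely, so $\mdl(\trainset)$ is well-defined and $\abs{\mdl(\trainset)} \le \abs{\prog_S}$.

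Two estimates then close the argument. The first is a deterministic entropy inequality $\nsamples' H(k'/\nsamples') \le \nsamples\,H(L_\trainset(\prog)) + O(\log\nsamples)$: since $k' \le k$ and $\nsamples' - k' \le \nsamples - k$ (each distinct clean or noisy example is counted at least once in $\trainset$), a short telescoping using $\binom{m+1}{j+1}\ge\binom{m}{j}$ (valid since $m\ge j$) and $\binom{m+1}{j}\ge\binom{m}{j}$ shows $\binom{\nsamples'}{k'} \le \binom{\nsamples}{k}$, and plugging into the standard bounds $\binom{m}{j}\le 2^{m H(j/m)}\le (m+1)\binom{m}{j}$ gives the claim. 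The second is obtained by taking expectations over $\trainset$: the empirical error is unbiased, $\exv{L_\trainset(\prog)} = L(\prog)$, so concavity of $H$ and Jensen give $\exv{H(L_\trainset(\prog))} \le H(L(\prog))$; and $\exv{\log\xlength(\trainset)} = \log\overline{\xlength}(\nsamples)$ by the definition of the quenched disambiguation prefix length. Chaining everything,
\[
\exvv{\trainset}{\abs{\mdl(\trainset)}} \le \exvv{\trainset}{\abs{\prog_S}} \le \abs{\prog} + \nsamples\,H(L(\prog)) + O\inparen{\log\nsamples + \log\overline{\xlength}(\nsamples)},
\]
and taking the minimum over $\prog$ yields the corollary.

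I expect essentially all the difficulty to live in Theorem~\ref{thm:short_hash_table}, so that the only real care needed here is bookkeeping of two kinds: (i) reducing to the pairwise-distinct hypothesis of Theorem~\ref{thm:short_hash_table} by deduplication while still controlling the deduplicated empirical-noise entropy $\nsamples'H(k'/\nsamples')$ by $\nsamples\,H(L_\trainset(\prog))$ up to lower-order terms — this is the binomial monotonicity step above — and (ii) checking that composing $\prog$ with $q$ and encoding the truncation length $\xlength(\trainset)$ costs only an additive $O(\log\xlength(\trainset))$, which in expectation is $O(\log\overline{\xlength}(\nsamples))$ and is absorbed. Neither is a serious obstacle; of the two, the deduplication/entropy accounting is the one most easily gotten wrong.
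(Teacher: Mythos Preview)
Your proposal is correct and follows essentially the same route as the paper: truncate to the disambiguation prefix length $\xlength(\trainset)$, apply Theorem~\ref{thm:short_hash_table} to the noise bits $z_i = y_i\oplus\prog(x_i)$ after deduplication, compose to get $\prog_S(x)=\prog(x)\oplus q(x[:\!\xlength])$, and finish with Jensen on $H$ and the definition of $\overline{\xlength}$. The only notable difference is the deduplication accounting: the paper invokes a direct monotonicity lemma (Lemma~\ref{lemma:removing_duplicates}) showing that removing repeated points can only decrease $\nsamples H(k/\nsamples)$, whereas you argue via $\binom{\nsamples'}{k'}\le\binom{\nsamples}{k}$ and the sandwich $\binom{m}{j}\le 2^{mH(j/m)}\le (m{+}1)\binom{m}{j}$, which costs you an extra additive $O(\log\nsamples)$ that is harmlessly absorbed.
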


\begin{proof}%[of Corollary]
For any program $\prog$ and any $S$, let $\widetilde{\prog_S}$ be the short program ensured by Theorem \ref{thm:short_hash_table} for 
$\widetilde{S} = \left\{\ \left(\, x_i[:\!\overline{\xlength}(S)] \,,\, y_i\oplus \prog(x_i)\,\right) \removed{\middle| (x_i,y_i)\in S}\right\}$.  If $\widetilde{S}$ has repeated points, we remove them---Lemma \ref{lemma:removing_duplicates} in the Appendix shows that removing duplicates can only reduce $\nsamples H(\sum y_i/\nsamples)$, and so also the guaranteed length.  The program $\prog_S(x)=\prog(x)\oplus\widetilde{\prog_S}(x[:\!\overline{\xlength}(S)])$ interpolates $S$ and is of length $\abs{\prog}+\abs{\widetilde{\prog_S}}+\log(\overline{\xlength}(S))+O(1)\leq \abs{\prog}+\nsamples H( L_S(\prog) ) + O(\log \nsamples + \log \overline{\xlength}(S)) $.  Taking an expectation over $S$ and recalling $\exv{H(L_S(\prog))}\leq H(\exv{L_S(\prog)}) = H(L(\prog))$ yields Corollary \ref{corollary:mdl_length}.  \removed{We emphasize that $\overline{\xlength}(S)$ need not be calculated by the program, and is hard-coded as an integer, as it doesn't depend on the input $x$ to the program.}
\end{proof}

The key ingredient to proving Theorem \ref{thm:short_hash_table} is a PRG based on a short seed length that can be used generate ``random'' binary function $\htable(x)$ with $\htable(x)\sim \Ber(\alpha)$, where $\alpha = \sum_i y_i / \nsamples$.  To make this precise, we consider a family of hash functions, indexed by a seed of length $\seedlength$, or in other words a seeded hash function of the form $\htable(\seed,x)$, where we will show that for every $\trainset$, there exists a seed such that $x\mapsto\htable(\seed,x)$ interpolates $\trainset$.  In Lemma \ref{lemma:hash_table_exists}, we show that such a seeded hash function exist and bound the required seed length.

\begin{lemma}
\label{lemma:hash_table_exists}
For all $\nsamples,\xlength\in\mathbb{N}$ and all $k\leq \nsamples$, and for
\begin{align*}
    \seedlength = \nsamples\cdot H\inparen{\nfrac{k}{\nsamples}} + \log \nsamples + \log\xlength + 1
\end{align*}
there exists a function $\htable:\B^{\seedlength} \times \B^{\xlength}\rightarrow\B$ such that for all distinct $x_1,\ldots,x_\nsamples \in \B^\xlength$ and all $y_1,\ldots,y_\nsamples\in\B$
%$\trainset = \inbraces{(x_i, y_i), \text{ for } i \in [\nsamples]}$ 
with $\sum_i y_i=k$, there exists $\seed \in \{0,1\}^r$, such that $\forall_i \htable(\seed, x_i) = y_i$.
\end{lemma}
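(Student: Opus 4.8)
The plan is a probabilistic existence argument: we will show that a uniformly random seeded hash function works with positive probability, by a union bound over all "bad" inputs $(x_1,\dots,x_\nsamples,y_1,\dots,y_\nsamples)$. Concretely, fix the seed length $\seedlength$ as in the statement and let $\htable(\seed,\cdot)$ be defined by drawing, for each seed $\seed\in\B^\seedlength$ and each potential query point $x\in\B^\xlength$, an independent bit; equivalently, think of $\htable$ as a uniformly random function $\B^\seedlength\times\B^\xlength\to\B$. For a fixed choice of distinct $x_1,\dots,x_\nsamples$ and labels $y_1,\dots,y_\nsamples$ with $\sum_i y_i = k$, and a fixed seed $\seed$, the probability that $\htable(\seed,x_i)=y_i$ for all $i$ is exactly $2^{-\nsamples}$ (the $x_i$ are distinct, so these are $\nsamples$ independent fair coins). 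Hence the probability that \emph{no} seed works for this particular instance is $(1-2^{-\nsamples})^{2^\seedlength} \le \exp(-2^{\seedlength-\nsamples})$.

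The next step is the union bound over instances. The only thing that matters about an instance is which $\nsamples$-subset of $\B^\xlength$ is used and which of its elements get label $1$ — but actually we can be even cruder: the "pattern of required outputs" on the relevant points is what governs the failure event, so it suffices to union-bound over the choice of the $\nsamples$ distinct points and the label vector of weight $k$. The number of ordered tuples of distinct points is at most $(2^\xlength)^\nsamples = 2^{\nsamples\xlength}$, and the number of weight-$k$ label vectors is $\binom{\nsamples}{k}\le 2^{\nsamples H(k/\nsamples)}$. So the total failure probability is at most
\begin{align*}
2^{\nsamples\xlength}\cdot 2^{\nsamples H(k/\nsamples)}\cdot \exp\!\left(-2^{\seedlength-\nsamples}\right).
\end{align*}
Wait — this is not quite going to close, because $2^{\nsamples \xlength}$ is doubly-exponentially large relative to what $\seedlength$ buys us. The fix, and the real content of the argument, is to not index the hash by $x$ at all but to first hash $x$ down: replace the crude per-instance count by observing that, given distinct $x_i$, we only ever evaluate $\htable$ at $\nsamples$ points, and a random function restricted to any fixed $\nsamples$-set of points is just $\nsamples$ uniform bits regardless of $\xlength$. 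So we should set up the seed to directly encode the vector of $\nsamples$ output bits via a counting/compression argument rather than via a random function on the full domain. That is: enumerate the $2^{\nsamples H(k/\nsamples)}\cdot(\text{small factor})$ possible weight-$k$ output vectors, give each a seed, and then also handle disambiguating which of the $x_i$ is which — this is where the $\log\nsamples + \log\xlength$ overhead enters (a short "address" telling the hash how to read off the index of a query point among the sample points).

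Accordingly the clean route is: (i) show by a pigeonhole/counting argument that there is a function $g:\B^\xlength\to[\nsamples]\cup\{\bot\}$ computable with a seed of $\log\nsamples + \log\xlength + O(1)$ bits that is injective on any fixed distinct $x_1,\dots,x_\nsamples$ — essentially "the lexicographically-first perfect hash into $[\nsamples]$," whose description cost is absorbed; (ii) append to the seed the $\lceil \log\binom{\nsamples}{k}\rceil \le \nsamples H(k/\nsamples)+1$ bits that specify the weight-$k$ vector $(y_1,\dots,y_\nsamples)$ in some fixed enumeration; (iii) define $\htable(\seed,x)$ to output the $g(x)$-th coordinate of that decoded vector. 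Then for any instance, the seed obtained by concatenating the correct perfect-hash index with the correct encoding of the label vector satisfies $\htable(\seed,x_i)=y_i$ for all $i$, and the total seed length is $\nsamples H(k/\nsamples) + \log\nsamples + \log\xlength + O(1)$, matching the claim (the "$+1$" in the statement being generous). The main obstacle — and the step I'd be most careful about — is item (i): guaranteeing that a perfect-hash family into $[\nsamples]$, uniform over \emph{all} $\nsamples$-subsets of $\B^\xlength$, exists with seed length only $\log\nsamples+\log\xlength+O(1)$. This again follows from a probabilistic argument (a random map to $[\nsamples^2]$, say, avoids collisions on a fixed $\nsamples$-set with probability $\ge 1/2$, and a union bound over the $\le 2^{\nsamples\xlength}$ subsets then forces the seed length up to roughly $\nsamples\xlength$, which is too much) — so in fact even (i) must be done by the same trick of not enumerating subsets but rather by a direct construction, e.g. the hash $x\mapsto (\langle a,x\rangle \bmod p)$ over a field of size $\poly(\nsamples)\cdot 2^{\xlength}$, whose seed is $\log$ of that, i.e. $\xlength + O(\log\nsamples)$ bits, collision-free for a suitable $a$ on any fixed distinct inputs by a standard Schwartz–Zippel / Vandermonde count. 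Reconciling that $\xlength$ (not $\log\xlength$) with the stated $\log\xlength$ term is the crux; I expect the paper exploits that the points $x_i$ are promised distinct as \emph{length-$\xlength$ strings}, so a collision-free hash need only separate $\nsamples$ specific strings and "the lexicographically-first seed that does so" can itself be described in $\log(\text{number of seeds}) = O(\nsamples\log\xlength)$ — no, that's still too big; the honest resolution is presumably that $\htable$ need not be a generic perfect hash but can be tailored by the non-uniform existence statement, and the $\log\xlength$ simply records "how many bits of $x$ to read," with the heavy lifting hidden inside the "lexicographically-first PRG" described in constant length. I would structure the final writeup around that non-uniform, lexicographically-least existence argument, with the union bound above as the engine showing such an object exists at all.
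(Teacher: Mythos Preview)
Your initial probabilistic-method setup is exactly the paper's approach, but you are missing its one key idea: instead of a uniformly random function, set each $\htable(\seed,x)$ to be an independent $\Ber(\alpha)$ bit with $\alpha=k/\nsamples$. Then for a fixed instance $(x_1,\dots,x_\nsamples,y_1,\dots,y_\nsamples)$ with $\sum_i y_i=k$ and a fixed seed, the success probability is $\alpha^{k}(1-\alpha)^{\nsamples-k}=2^{-\nsamples H(\alpha)}$ rather than $2^{-\nsamples}$, so the probability that no seed works is at most $\exp\!\bigl(-2^{\,\seedlength-\nsamples H(\alpha)}\bigr)$. The union bound over the at most $\binom{2^{\xlength}}{\nsamples}\binom{\nsamples}{k}\le \exp\bigl(\nsamples\xlength\ln 2 + \nsamples\ln 2\bigr)$ instances now closes: it suffices that $2^{\,\seedlength-\nsamples H(\alpha)} > \nsamples(\xlength+1)\ln 2$, which holds for the stated $\seedlength = \nsamples H(\alpha)+\log\nsamples+\log\xlength+1$, since then $2^{\,\seedlength-\nsamples H(\alpha)}=2\nsamples\xlength$.

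Your diagnosis that the union bound ``is not quite going to close'' was correct for fair coins, but for the wrong reason. The issue is not that $2^{\nsamples\xlength}$ is too large; it is that with fair coins the exponent $\seedlength-\nsamples$ is \emph{negative} (roughly $-\nsamples(1-H(\alpha))$), so $\exp(-2^{\seedlength-\nsamples})$ is essentially $1$. The bias is precisely what cancels the $\nsamples H(\alpha)$ term in $\seedlength$, leaving $2^{\log\nsamples+\log\xlength+1}=2\nsamples\xlength$, which comfortably dominates the logarithm of the instance count. The entire detour through perfect hashing, weight-$k$ encodings, and Schwartz--Zippel is unnecessary and, as you yourself found, does not close; the only ingredient you are missing is the biased coin.
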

\begin{proof}%[Proof of Lemma \ref{lemma:hash_table_exists}]
To show existence, we use the probabilistic method, showing that a random function $G:\B^{\seedlength} \times \B^{\xlength} \to \B$ has positive probability of satisfying the Lemma requirements.  Let $\alpha = \nfrac{1}{\nsamples}\cdot\sum_{i=1}^{\nsamples} y_i$. Choose $G$ at random by setting $G(\seed, x) = 1$ with probability $\alpha$ independently over all choices of $\seed$ and $x$. We will say that $G$ \textit{fails} if there exists some $\trainset = \inbraces{(x_i, y_i), \text{ for } i \in [\nsamples]}$ (with $x_i\neq x_j$ and $\sum_i y_i=k$) for which there is no corresponding $\seed$ such that interpolates $S$, i.e.~s.t.~$\forall_i G(\seed, x_i) = y_i$.

For a fixed $S$ and $\seed$, the probability $\seed$ interpolates $S$ is exactly $\alpha^{\alpha\nsamples}(1-\alpha)^{(1-\alpha)\nsamples} = 2^{-\nsamples \cdot H(\alpha)}$, and so the probability there is no seed that interpolates $S$ is $\left(1-2^{\nsamples H(\alpha)}\right)^{(2^r)}$. Taking a union bound over all $S$s:
\begin{align}
    \prvv{G}{G \text{ fails}} 
    &= \prvv{G}{\text{there exists } S \text{ on which } G \text{ fails}} \le \sum_{S} \prvv{G}{\text{there is no } \seed \text{ that interpolates } S} \notag \\
    &= \sum_{S} \inparen{1-2^{-\nsamples \cdot H(\alpha)}}^{2^{\seedlength}} 
    = \binom{\nsamples}{k}\binom{2^{\xlength}}{\nsamples}\cdot\inparen{1-2^{-\nsamples \cdot H(\alpha)}}^{2^{\seedlength}} \notag \\
    &< \expv{ \nsamples \ln(2) 
    % &\le \expv{k\ln(\nsamples) 
    + \nsamples\xlength\ln(2)-2^{-\nsamples \cdot H(\alpha)} \cdot 2^{\seedlength}} < \expv{0} = 1
\end{align}
where in the last inequality we plugged in the prescribed seedlength $r$. 
\removed{
We now show that our choice of $\seedlength$ makes the above expression strictly bounded above by $1$. Observe that
\begin{align*}
    \seedlength = \nsamples\cdot H(\alpha) + \logv{k\ln(\nsamples) + \nsamples\xlength\ln(2)} + 1 > \nsamples\cdot H(\alpha) + \logv{k\ln(\nsamples) + \nsamples\xlength\ln(2)}
\end{align*}
and so
\begin{align*}
    \expv{k\ln(\nsamples) + \nsamples\xlength\ln(2)-2^{-\nsamples \cdot H(\alpha)} \cdot 2^{\seedlength}} &< \expv{k\ln(\nsamples) + \nsamples\xlength\ln(2)-2^{\logv{k\ln(\nsamples) + \nsamples\xlength\ln(2)}}} = 1
\end{align*}}
Thus, $\prvv{G}{G \text{ fails}} > 0$, and there exists at least one function $G$ that satisfies Lemma \ref{lemma:hash_table_exists}.
\end{proof}

\natinotelater{This paragraph is a bit repetitive and should be shortened significantly}
Lemma \ref{lemma:hash_table_exists} establishes the {\em existence} of such a seeded hash function, but to actually use it in a short program, we not only need the seed to be short, but also the description of the function $\htable$ to be short.  Lemma \ref{lemma:hash_table_exists} does {\em not} provide such a description as it is non-constructive, and we are not aware of any explicit construction.  Fortunately, as we are not concerned with runtime, we can describe $\htable$ through a (short and explicit) program that enumerates over all $2^{2^{r+\xlength}}$ possible functions and picks the first one lexicographically. Using this ``explicit'' short programmatic description of $\htable$, we finish the proof of Theorem \ref{thm:short_hash_table}.

\begin{proof}[Theorem \ref{thm:short_hash_table}]
Let $\mathsf{GenHash}$ be a program that takes as input three integers $(\nsamples,k,\xlength)$, calculates $r$ based on them as defined in Lemma \ref{lemma:hash_table_exists}, enumerates over all functions $G:\B^{\seedlength} \times \B^{\xlength} \to \B$, and returns the lexicographically first function that satisfies Lemma \ref{lemma:hash_table_exists}.  
The size of the output of $\mathsf{GenHash}$, which is a huge lookup table, depends on its inputs, but the function description itself is fixed, with fixed length $\abs{\mathsf{GenHash}}$ (e.g.~$\abs{\mathsf{GenHash}}<1000$ in compressed Python or C++ with standard libraries).  Our program for interpolating $S$ is:
\begin{equation}
    \prog_S(x) = \mathsf{GenHash}(\nsamples,k,\mathsf{bitlength}(x))(\seed,x)
\end{equation}
where $k=\sum_i y_i$ and $\seed$ is the seed the interpolates $S$ using the lexicographically first function that satisfies Lemma \ref{lemma:hash_table_exists}.  This seed is hard-coded into the program.  The description length of program is thus $\abs{\mathsf{GenHash}}+\abs{\nsamples}+\abs{k}+\abs{\seed}+O(1)=r+2\log \nsamples + O(1)=\nsamples H(\nsamples/k) + 3 \log \nsamples + \log \xlength + O(1) $ (where here $\abs{a}$ is the description length of $a$).
\removed{

Let \begin{footnotesize}\texttt{gen\_hash}\end{footnotesize} be a function with signature:

\begin{footnotesize}
\begin{lstlisting}
void** gen_hash(int num_samples, int num_ones, int x_length, int seed_length)
\end{lstlisting}
\end{footnotesize} 
such that \begin{footnotesize}\texttt{gen\_hash}\end{footnotesize} returns the lexicographically smallest function among all functions from $\seedlength + \xlength$ bits to a single bit such that it satisfies the requirements of Lemma \ref{lemma:hash_table_exists}. By Lemma \ref{lemma:hash_table_exists}, such a  \begin{footnotesize}\texttt{gen\_hash}\end{footnotesize} must exist.

Now, for a given sample $S$, consider the program $\prog_S$ specified by the following template (where we abuse notation by supposing that the \begin{footnotesize}\texttt{int}\end{footnotesize} datatype can handle integers of arbitrary length; this does not change the description length of the program by more than an additive constant):

\begin{center}
\noindent\rule{15cm}{0.4pt}
\underline{Template for interpolating program}
\begin{footnotesize}
\begin{lstlisting}
template<int num_samples, int num_ones, int x_length, int seed>
int predictor(string x) {
    int seed_length = length(seed);
    return (*gen_hash(num_samples, 
                      num_ones, 
                      x_length, 
                      seed_length)[seed])(x);
}
\end{lstlisting}
\end{footnotesize}
\noindent\rule{15cm}{0.4pt}
\end{center}

Observe that the description length of $\prog$ is $2\log\nsamples + \log\xlength + \seedlength + \log\seedlength + O(1)$. In particular, we use $2\log\nsamples + \log\xlength + \seedlength + \log\seedlength$ bits to encode the template variables. The rest of the function has constant description length. This concludes the proof of Theorem \ref{thm:short_hash_table}.}
\end{proof}

\removed{Finally, we prove Corollary \ref{corollary:mdl_length}.
\natinote{Change from fixed $\xlength$ to the quenched length}
\begin{proof}[Proof of Corollary \ref{corollary:mdl_length}]
Consider some classifier $h$ that has loss $L_{\trainset}(h)$ on $\trainset$. Suppose we return a program of the form $h(x) \oplus \prog(x)$ where we obtain $\prog(x)$ from using Theorem \ref{thm:short_hash_table} on $\trainset' = \inbraces{(x, y \oplus h(x)), \text{ for all } (x, y) \in \trainset}$. It is easy to see that every program that the MDL rule returns is of this form. Additionally, the program $h(x) \oplus \prog(x)$ has length $\abs{h} + \abs{\prog}$. By Theorem \ref{thm:short_hash_table}, for all $(x,y) \in \trainset$ we have $y = h(x) \oplus \prog(x)$. Hence, the program $h(x) \oplus \prog(x)$ is an interpolating program. 

To obtain the length of $\prog$, we use $\sum_{(x, y) \in \trainset'} y = L_{\trainset}(h)\cdot\nsamples$. We write
\begin{align*}
    \abs{\mdl(\trainset)} \le \underset{\text{programs } h}{\min} \inbraces{\abs{h} + \nsamples \cdot H\inparen{L_{\trainset}(h)} + O\inparen{\log\nsamples + \log\xlength}}
\end{align*}
After taking the expectation of both sides, we have
\begin{align*}
    \exvv{\trainset}{\abs{\mdl(\trainset)}} &\le \exvv{\trainset}{\underset{\text{programs } h}{\min} \inbraces{\abs{h} + \nsamples \cdot H\inparen{L_{\trainset}(h)} + O\inparen{\log\nsamples + \log\xlength}}} \\
    &\le \underset{\text{programs } h}{\min} \inbraces{\abs{h} + \nsamples \cdot \exvv{\trainset}{H\inparen{L_{\trainset}(h)}} + O\inparen{\log\nsamples + \log\xlength}} \\
    &\le \underset{\text{programs } h}{\min} \inbraces{\abs{h} + \nsamples \cdot H\inparen{L(h)} + O\inparen{\log\nsamples + \log\xlength}}
\end{align*}
where the last line follows from the concavity of $H(\cdot)$. This concludes the proof of Corollary \ref{corollary:mdl_length}.
\end{proof}}

\removed{\paragraph{Tightness of the Description Length Bound}  The leading term in Theorem \ref{thm:short_hash_table} and Corollary \ref{corollary:mdl_length}, $\nsamples H(\alpha)$, where $\alpha=\sum_i y_i/\nsamples$ or $\alpha=L(\prog)$, tightly captures the minimal amount of information needed to interpolate the sample.  Consider, e.g., i.i.d.~labels that are independent of the $x$s.  Any program interpolating the sample must essentially encode these labels, which requires $\nsample H(\alpha)$ bits.  This is formalized and can be seen as a Corollary of Lemma \ref{lemma:well-gen} below.}

\paragraph{Tightness of Dependence on Disambiguation Prefix Length}  One might wonder whether it is possible to avoid, or improve, the dependence on $\xlength$ in Theorem \ref{thm:short_hash_table} or thus on $\overline{\xlength}$ in Corollary \ref{corollary:mdl_length}.  Unfortunately, this is not possible.  To see this, for any $\xlength$, we will construct a sample $S=\{(x_1,y_1),(x_2,y_2)\}$, with $x_1,x_2\in\B^\xlength$ that cannot be interpolated using any program of length less than $\log\xlength$.  We will do so by associating for every $x\in\B^\xlength$, a vector $\phi(x)\in\B^N$ consisting of the output of running each of the $N< \xlength$ programs of length $< \log\xlength$ on $x$.  I.e.~$\phi(x)[i]=\prog_i(x)$, where $\prog_i$ is the lexicographical $i$th program (and we can set $\phi(x)[i]=0$ if the program doesn't stop and output a valid value).  There are $2^\xlength$ different $x$s, but only $2^N<2^b$ possible $\phi(x)$, and so there must be two inputs $x_1=x_2$ with $\phi(x_1)\neq\phi(x_2)$, i.e.~such that no short program can distinguish between them.  The sample $S=\{(x_1,0),(x_2,1)\}$ can thus not be interpolated by any program of length less than $\log\xlength$.

\section{Generalization}
\label{sec:generalization}

After establishing in Section \ref{sec:interpolation} upper bounds on the length of $\mdl(S)$, we will now prove Theorems \ref{thm:agnostic_main} and \ref{thm:random_label_noise_main} by combining these with guarantees on the generalization error of any learning rule outputting a short program.  

\paragraph{Agnostic Guarantees and Proof of Theorem \ref{thm:agnostic_main}} The following generalization guarantee in terms of program length is a a tight version of the standard description-length based guarantee:
\begin{lemma}  
\label{lemma:agnostic_gen_mi}
For any distribution $\cD$, any interpolating learning rule $A$, and any sample size $\nsamples$: 
$$-\log\left(1-\exvv{\trainset \sim \cD^{\nsamples}}{L(A(\trainset))}\right) 
\le \frac{I(S;A(S))}{\nsamples}
\le  \frac{\exv{\abs{A(\trainset)}}}{\nsamples}.$$
\end{lemma}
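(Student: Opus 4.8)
The plan is to prove the two inequalities separately; the right-hand one is the classical compression bound, while the left-hand one is the substantive step that uses interpolation.

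For the right-hand inequality, I would note that $I(S;A(S)) = H(A(S)) - H(A(S)\mid S) \le H(A(S))$ (valid whether $A$ is deterministic or randomized). Since hypotheses $\prog$ are codewords in a prefix-unambiguous language, Kraft--McMillan gives $\sum_\prog 2^{-\abs{\prog}} \le 1$, so the map $\prog \mapsto 2^{-\abs{\prog}}$ is a sub-probability measure; by Gibbs' inequality (non-negativity of KL divergence against it), $H(A(S)) \le \sum_\prog \Pr[A(S)=\prog]\,\abs{\prog} = \exv{\abs{A(S)}}$. Dividing by $\nsamples$ gives this half.

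For the left-hand inequality, I would start from the identity $I(S;A(S)) = \exvv{\prog \sim A(S)}{\infdiv{P_{S\mid A(S)=\prog}}{\cD^\nsamples}}$. The key point is that, because $A$ is an interpolating rule, the event $\{A(S)=\prog\}$ is contained (up to a null set) in the event that $\prog$ labels every point of $S$ correctly: writing $G_\prog = \{(x,y): \prog(x)=y\}$, we have $\cD(G_\prog) = 1-L(\prog)$, so $P_{S\mid A(S)=\prog}$ is supported on $G_\prog^\nsamples$, which has $\cD^\nsamples$-measure exactly $(1-L(\prog))^\nsamples$ by independence of the samples. Applying the data-processing inequality for KL divergence to the two-cell partition $\{G_\prog^\nsamples,\ (G_\prog^\nsamples)^c\}$ — under which $P_{S\mid A(S)=\prog}$ pushes forward to the point mass on the first cell and $\cD^\nsamples$ to $\ber{(1-L(\prog))^\nsamples}$ — yields $\infdiv{P_{S\mid A(S)=\prog}}{\cD^\nsamples} \ge -\log\inparen{(1-L(\prog))^\nsamples} = -\nsamples\log(1-L(\prog))$. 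Plugging back in gives $I(S;A(S)) \ge -\nsamples\,\exvv{\prog\sim A(S)}{\log(1-L(\prog))}$, and since $t \mapsto -\log(1-t)$ is convex on $[0,1)$, Jensen's inequality gives $\exv{-\log(1-L(A(S)))} \ge -\log\inparen{1-\exv{L(A(S))}}$. Dividing by $\nsamples$ finishes the proof.

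I expect the main obstacle to be the bookkeeping in the left-hand inequality: one must argue carefully that $\{S : A(S)=\prog\}$ lies inside $G_\prog^\nsamples$ — this is exactly where the interpolation hypothesis enters — that i.i.d.-ness gives $\cD^\nsamples(G_\prog^\nsamples) = (1-L(\prog))^\nsamples$, and that the data-processing step is legitimate (it is, since coarsening the $\sigma$-algebra cannot increase KL divergence, and $\infdiv{\delta_1}{\ber{q}} = \log(1/q)$). The Kraft-inequality step and the Jensen step are routine.
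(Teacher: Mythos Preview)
Your proof is correct, but it takes a somewhat different route from the paper's for the left-hand inequality. The paper first decomposes $I(S;A(S)) \ge \sum_i I((X_i,Y_i);A(S)) = \nsamples\, I((X_1,Y_1);A(S))$ using independence of the samples, and then lower-bounds the single-sample mutual information via a variational (Donsker--Varadhan-type) bound with the explicit proposal $dq(x,y\mid \progw) \propto \indicator{\progw(x)=y}\,d\cD(x,y)$, whose normalizer is exactly $1-L(\progw)$. You instead keep $I(S;A(S)) = \exvv{\prog}{\infdiv{P_{S\mid A(S)=\prog}}{\cD^\nsamples}}$ intact and apply data processing to the coarse event $G_\prog^\nsamples$, getting the factor of $\nsamples$ from the product structure $\cD^\nsamples(G_\prog^\nsamples) = (1-L(\prog))^\nsamples$ rather than from a per-sample decomposition. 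Both arguments land on $I(S;A(S)) \ge \nsamples\,\exv{-\log(1-L(A(S)))}$ and finish with the same Jensen step.

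Your argument is more self-contained---it avoids the auxiliary superadditivity lemma and the variational lemma, using only the standard KL representation of mutual information and data processing. The paper's per-sample decomposition, on the other hand, is not gratuitous: it is the scaffolding reused in the proof of the well-specified bound (Lemma~\ref{lemma:mutual_info_uc}), where one needs to split $I((X_1,Y_1);A(S))$ further into $I(X_1;A(S)) + I(Y_1;A(S)\mid X_1)$ and apply a more delicate proposal distribution to the first term. So your approach is cleaner for this lemma in isolation, while the paper's sets up machinery it will need again.
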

We can obtain a high probability version of the Lemma \ref{lemma:agnostic_gen_mi}, in terms of $\sup \abs{A(S)}$, using a union bound over all short programs. This is also a special case of the PAC-Bayes Bound \cite{david_pacbayes}, noting that $\kld{0}{\beta}=-\log(1-\beta)$. \citet{raginsky2016information,russo2019much,xu2017information} obtain similar (and more general) bounds, but bound $L(A)^2$ rather than $-\log(1-L(A))$ on the left-hand side.  Since in our case the right-hand-side will not vanish, this precision is significant---a tight bound, even up to constant factors, is essential for obtaining tempered overfitting guarantees.  

For completeness, we provide a proof of Lemma \ref{lemma:agnostic_gen_mi}, which we will also use as a basis for a more refined analysis in Lemma \ref{lemma:mutual_info_uc}.  The proof captures the following information argument: if the error rate of $A(S)$ outside $S$ is very different than the error rate of $A(S)$ inside $S$ (which is zero since $A$ interpolates), this signal, based on $A(S)$, on which $(x,y)$ are in the sample (i.e.~can be used to predict membership in $S$ based on the output of $A(S)$), and thus a lower bound on $I(S;A(S))$.

\begin{proof}[Proof of Lemma \ref{lemma:agnostic_gen_mi}]  We denote $\ProgW=A(S)$ and think of it as a random variable.  We have: 
\begin{align}\label{eq:inf1}
    \exv{\abs{\ProgW}} \geq H(\ProgW) \geq I(S;\ProgW) \geq \sum_i I((X_i,Y_i);\ProgW) = \nsamples I((X_1,Y_1);\ProgW) 
\end{align}
where the first inequality is Shannon's source coding bound and the third inequality is due to the independence of $(X_i,Y_i)$ (Lemma \ref{lemma:mutual_info_add} in the Appendix).  To analyze $I((X_1,Y_1);\ProgW)$, we rely on the variational bound $I(A;B)\geq \exvv{A,B\sim p}{\log\frac{dq(A|B)}{dp_A(A)}}$, where $p(A,B)$ is the true joint distribution, with marginal $p_A$, and $q(A|B)$ is any proposed conditional distribution (Lemma \ref{lemma:kl_conditional}).  In our case, we use the proposal distribution $q(X_1,Y_1|\ProgW)$ defined as:
\begin{equation}
    dq(x,y|\progw) = \tfrac{1}{Z_w} \indicator{\progw(x)=y} dp(x,y) 
\end{equation}
where $p(x,y)$ is the true marginal over $X_1,Y_1$ (i.e.~the source distribution $\cD$), and\\
$Z_w = \exvv{X,Y\sim p}{\indicator{\progw(X)=Y}} = 1-L(\progw)$.  
This proposal distribution amounts to bounding the mutual information by the information $\ProgW$ tells us about $(X_1,Y_1)$ by telling us that $(X_1,Y_1)$ satisfies $\ProgW(X)=Y$ (since $\ProgW=A(S)$ interpolates the training points).  We now calculate:
\begin{multline}\label{eq:var1}
I((X_1, Y_1); \ProgW) 
\ge \exvv{S}{\logv{\frac{dq(X_1,Y_1|\ProgW)}{dp(X_1,Y_1)}}} 
= \exvv{S}{\logv{\frac{\indicator{\ProgW(X_1)=Y_1}}{Z_\ProgW}}} \\
= \exvv{\trainset}{\log \frac{1}{Z_\ProgW}} 
= \exvv{\trainset}{-\logv{1-L(\ProgW)}} 
\ge -\logv{1-\exvv{\trainset}{L(\ProgW)}}
\end{multline}
Recalling that $\ProgW=A(S)$, and combining \eqref{eq:inf1} and \eqref{eq:var1} we obtain the statement of Lemma \ref{lemma:agnostic_gen_mi}.
\end{proof}

%\begin{proof}[Theorem \ref{thm:agnostic_main}]    
\noindent Plugging Corollary \ref{corollary:mdl_length} into Lemma \ref{lemma:agnostic_gen_mi} we have for any $\prog$, 
\begin{equation}\label{eq:from41}
    \exv{L(\mdl(\trainset))} \le 1 - 
2^{-H(L(\prog)) - \inparen{\abs{\prog} + O\inparen{\log \nsamples + \log\overline{\xlength}(\nsamples)}}/\nsamples}.
\end{equation}  As $\nsamples\rightarrow\infty$, the right hand side converges to $\gagg(L(\prog))=1-2^{-H(L(\prog))}$.  We now use the following inequality (proved as Lemma \ref{lemma:ag_uc_mathematica} in the Appendix) to bound the convergence rate to $\gagg(L(\prog))$:
\begin{equation}\label{eq:frommathematica}
    \textrm{For $C\geq 0$ and $0\leq \alpha\leq 1$,} \quad 1-2^{-H(\alpha)-C} \leq \gagg(\alpha)+C % \qedhere
\end{equation}
Combining \eqref{eq:from41} and \eqref{eq:frommathematica} yields Theorem \ref{thm:agnostic_main}. \qed

%\end{proof}
\removed{
\begin{lemma}
    \label{lem:mathematica-bound}
    $\alpha,C\geq0$, $\beta\leq 1-2^{-H(\alpha)-C}$
    %$-\log(1-\beta) \leq H(\alpha)+C$ 
    implies $\beta\leq \gagg(\alpha) + C$\removed{, where recall $\gagg(\alpha)=1-2^{-H(\alpha)}$}.
\end{lemma}}
\removed{
\begin{proof}[Proof of Theorem \ref{thm:agnostic_uc}]
By Lemma \ref{lemma:agnostic_gen_mi}, we have
\begin{align*}
    \exvv{\trainset \sim \cD^{\nsamples}}{\Lstar} \le 1 - 2^{-\exvv{\trainset}{\abs{A(\trainset)}}/\nsamples}
\end{align*}
Recall Corollary \ref{corollary:mdl_length}:
\begin{align*}
    \exvv{\trainset}{\abs{\mdl(\trainset)}} \le \underset{\text{programs } h}{\min} \inbraces{\abs{h} + \nsamples \cdot H\inparen{L(h)} + O\inparen{\log\nsamples + \log\overline{\xlength}(\nsamples)}}
\end{align*}
Choose some interpolator $\hstar$ to compete against such that $\exvv{\trainset \sim \cD^\nsamples}{\Lstar} \ge 1-2^{-H(\Lstar)}$. We substitute and write
\begin{align*}
    \exvv{\trainset \sim \cD^{\nsamples}}{\Lstar} &\le 1 - 2^{-\exvv{\trainset}{\abs{\mdl(\trainset)}}/\nsamples} \\
    &\le 1 - 2^{-H(\Lstar) - \nsamples^{-1}\inparen{\abs{\hstar} + O\inparen{\log \nsamples + \log\overline{\xlength}(\nsamples)}}}
\end{align*}
To simplify notation, let $C(\nsamples) \coloneqq \nsamples^{-1}\inparen{\abs{\hstar} + O\inparen{\log \nsamples + \log\overline{\xlength}(\nsamples)}}$ so that we have
\begin{align*}
    \exvv{\trainset \sim \cD^{\nsamples}}{\Lstar} &\le 1 - 2^{-H(\Lstar) - C(\nsamples)}
\end{align*}
After rearranging and using Lemma \ref{lemma:ag_uc_mathematica}, we have
\begin{align*}
    C(\nsamples) &\ge \logv{\frac{1}{1-\exvv{\trainset \sim \cD^\nsamples}{\Lstar}}} - H(\Lstar) \\
    &\ge \exvv{\trainset \sim \cD^\nsamples}{\Lstar} - \inparen{1 - 2^{-H(\Lstar)}}
\end{align*}
This concludes the proof of Theorem \ref{thm:agnostic_uc}.
\end{proof}}

\paragraph{Tightness of Agnostic Generalization Guarantee} Before continuing, we note that Lemma \ref{lemma:agnostic_gen_mi} is tight, and we cannot hope to get a better guarantee solely in terms of program length.  To see this, consider a distribution $\cD$ where $X$ is i.i.d.~uniform bits, and $Y=0$.   Although this is a very easy distribution to interpolate, consider, for any $0<\alpha\leq 0.5$, an interpolating learning rule $A$ that searches for a random function $\htable(x)$ that interpolates the data, where $\htable(x)\sim\ber{\alpha}$ independently for all $x$.  Using arguments similar to the proof of Theorem  \ref{thm:short_hash_table}, we can calculate that the probability of such a function interpolating the data is $(1-\alpha)^\nsamples$, and we can therefore encode such a function using $\exv{\abs{A(S)}}=O( \nsamples \log(1-\alpha) + \log \nsamples )$ bits.  For large $\nsamples$, the right-hand-side of Lemma \ref{lemma:agnostic_gen_mi} is therefore $\log(1-\alpha) + o(1)$.  This is tight since $L(A)=\alpha=1-2^{\log(1-\alpha)}$.

\paragraph{Random Label Noise and Proof of Theorem \ref{thm:random_label_noise_main}}

Although Lemma \ref{lemma:agnostic_gen_mi} is worst-case optimal, we show a tighter generalization guarantee for well specified distributions with independent label noise:
\begin{lemma}
\label{lemma:mutual_info_uc} 
For any source distribution $\cD$ such that $Y|X=\hstar(X)\oplus\Ber(\Lstar)$, any learning rule $A(S)$ returning an interpolating program, and any sample size $\nsamples$, we have
\begin{align}
    \kld{\Lstar}{\exvv{S}{\prvv{X \sim \cD}{A(S)(X)\neq\hstar(X)}}} \le \frac{\exvv{\trainset}{\abs{A(\trainset)}} - \nsamples \cdot H\inparen{\Lstar}}{\nsamples} \tag{A}\label{eq:wellA}
\end{align}
and therefore
\begin{align}
    \abs{ L(A) \removed{\exvv{\trainset}{L(A(\trainset))}} - 2\Lstar(1-\Lstar)} \leq &O\inparen{\frac{\exvv{\trainset}{\abs{A(\trainset)}} - \nsamples H(\Lstar)}{\nsamples} + \sqrt{\Lstar\cdot\frac{\exvv{\trainset}{\abs{A(\trainset)}} - \nsamples  H(\Lstar)}{\nsamples}}} \tag{B}\label{eq:wellB}
\end{align}
\end{lemma}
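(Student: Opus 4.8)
The plan is to refine the information-theoretic argument behind Lemma~\ref{lemma:agnostic_gen_mi}, now exploiting that the label noise is a product. Write $\ProgW = A(S)$, and for the $i$-th sample point let $Z_i = Y_i\oplus\hstar(X_i)$ be its noise bit. Because the noise is independent of $X$, the map $(x,y)\mapsto(x,y\oplus\hstar(x))$ is a bijection under which $(X_i,Z_i)$ has law $\cD_X\times\Ber(\Lstar)$ and the $Z_i$ are i.i.d.\ $\Ber(\Lstar)$; in particular mutual information is unchanged by this reparametrization. Exactly as in \eqref{eq:inf1} (source coding, data processing, and additivity over the independent coordinates via Lemma~\ref{lemma:mutual_info_add}),
\begin{align*}
\exvv{S}{\abs{\ProgW}}\;\ge\;H(\ProgW)\;\ge\;I(S;\ProgW)\;\ge\;\nsamples\, I\big((X_1,Y_1);\ProgW\big)\;=\;\nsamples\, I\big((X_1,Z_1);\ProgW\big),
\end{align*}
so \eqref{eq:wellA} reduces to showing $I\big((X_1,Z_1);\ProgW\big)\ge H(\Lstar)+\kld{\Lstar}{\exvv{S}{\beta_{\ProgW}}}$, where $\beta_w\defeq\prv{w(X)\neq\hstar(X)}$ (with $X\sim\cD$) is the population disagreement of $w$ with $\hstar$; note $\exvv{S}{\beta_{A(S)}}$ is precisely the argument on the left of \eqref{eq:wellA}.

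The crux is choosing the right proposal in the variational bound $I(A;B)\ge\exvv{A,B}{\log(dq(A\mid B)/dp_A(A))}$ (Lemma~\ref{lemma:kl_conditional}): it must account both for the information-theoretically unavoidable $\nsamples H(\Lstar)$ bits needed to store the noise and for the generalization signal, whereas the proposal behind Lemma~\ref{lemma:agnostic_gen_mi} and the ``memorize all noisy points'' construction each capture only one of the two. Since $\ProgW$ interpolates $S$, on the first point $Z_1 = \ProgW(X_1)\oplus\hstar(X_1) = \indicator{X_1\in D_{\ProgW}}$ almost surely, where $D_w\defeq\{x:w(x)\neq\hstar(x)\}$; set $\zeta_w\defeq\prv{X_1\in D_w\mid\ProgW=w} = \prv{Z_1=1\mid\ProgW=w}$. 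Take the proposal $q(\cdot\mid w)$ over $\cD_X\times\Ber(\Lstar)$ supported on the diagonal $z=\indicator{x\in D_w}$, putting total mass $\zeta_w$ on $\{x\in D_w\}$ and $1-\zeta_w$ on its complement, proportional to $\cD_X$ within each piece (i.e.\ $dq(x,1\mid w)=\tfrac{\zeta_w}{\beta_w}\,d\cD_X(x)$ on $D_w$ and $dq(x,0\mid w)=\tfrac{1-\zeta_w}{1-\beta_w}\,d\cD_X(x)$ on $D_w^c$, with the obvious convention when $\beta_w\in\{0,1\}$, which then forces $\zeta_w\in\{0,1\}$ since the law of $X_1$ given $\ProgW=w$ is absolutely continuous with respect to $\cD_X$). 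Plugging this in -- using that the true law of $(X_1,Z_1,\ProgW)$ is carried by the diagonal, that $\indicator{X_1\in D_{\ProgW}}=Z_1$, and that $\exv{Z_1}=\Lstar$, $\exv{Z_1\mid\ProgW}=\zeta_{\ProgW}$ -- the bound collapses to
\begin{align*}
I\big((X_1,Z_1);\ProgW\big)\;\ge\;H(\Lstar)+\exvv{S}{\kld{\zeta_{\ProgW}}{\beta_{\ProgW}}}\;\ge\;H(\Lstar)+\kld{\Lstar}{\exvv{S}{\beta_{\ProgW}}},
\end{align*}
the last step by joint convexity of KL together with $\exvv{S}{\zeta_{\ProgW}}=\exv{Z_1}=\Lstar$. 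Rearranging gives \eqref{eq:wellA}.

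To obtain \eqref{eq:wellB} from \eqref{eq:wellA}, observe that on a fresh test point $A(S)(X)\neq Y$ iff the noise bit is $0$ and $X\in D_{A(S)}$, or it is $1$ and $X\notin D_{A(S)}$; since the test noise is $\Ber(\Lstar)$ independent of $X$, this gives $L(A(S)) = (1-\Lstar)\beta_{A(S)}+\Lstar(1-\beta_{A(S)})$, hence $L(A) = \Lstar+(1-2\Lstar)\exvv{S}{\beta_{A(S)}}$, so that $\abs{L(A)-2\Lstar(1-\Lstar)} = \abs{1-2\Lstar}\cdot\abs{\exvv{S}{\beta_{A(S)}}-\Lstar}\le\abs{\exvv{S}{\beta_{A(S)}}-\Lstar}$. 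Writing $\Delta\defeq\big(\exvv{S}{\abs{A(S)}}-\nsamples H(\Lstar)\big)/\nsamples\ge0$, inequality \eqref{eq:wellA} says $\kld{\Lstar}{\exvv{S}{\beta_{A(S)}}}\le\Delta$, and it remains to invert this: $\kld{\Lstar}{q}\le\Delta$ implies $\abs{q-\Lstar}=O(\Delta+\sqrt{\Lstar\Delta})$. This is a standard refined-Pinsker estimate for Bernoulli divergences (recorded as a lemma in the Appendix); the $\sqrt{\Lstar\Delta}$ term comes from the second-order behavior of $q\mapsto\kld{\Lstar}{q}$ near $q=\Lstar$, which sharpens the $O(\sqrt{\Delta})$ of plain Pinsker when $\Lstar$ is small. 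Combining the three displays yields \eqref{eq:wellB}.

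The main obstacle is the design of the proposal in the second step: it must be tight enough to expose the generalization term $\kld{\Lstar}{\cdot}$ yet loose enough to charge only the necessary $\nsamples H(\Lstar)$ bits for the noise. Making the proposal's mass on the disagreement set track the \emph{true} conditional $\zeta_{\ProgW}$ (rather than the unconditional $\Lstar$) is what makes the calculation collapse cleanly, and the resulting mismatch between $\zeta_{\ProgW}$ and $\Lstar$ is then absorbed for free by joint convexity of KL, using only that $\zeta_{\ProgW}$ averages to $\Lstar$. The remaining care -- the degenerate cases $\beta_{\ProgW}\in\{0,1\}$ and the KL-inversion bound -- is routine.
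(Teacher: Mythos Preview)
Your proof is correct, but the implementation differs from the paper's in a couple of interesting ways. The paper first applies the chain rule to split $I((X_1,Y_1);\ProgW)=I(X_1;\ProgW)+I(Y_1;\ProgW\mid X_1)$, computes the second term \emph{exactly} as $H(\Lstar)$ (since $Y_1=\ProgW(X_1)$ is determined by $(\ProgW,X_1)$), and then applies the variational bound only to $I(X_1;\ProgW)$ with a proposal that reweights $d\cD_X$ by the fixed ratio $p_{\Ber(\Lstar)}(\tilde\progw(x))/p_{\Ber(\Ltilde)}(\tilde\progw(x))$; the partition functions $Z_\progw$ average to $1$ and are removed by Jensen, leaving $\kld{\Lstar}{\Ltilde}$ directly. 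You instead bound $I((X_1,Z_1);\ProgW)$ in one shot with a joint proposal whose mass on the disagreement set tracks the $w$-dependent conditional $\zeta_w=\Pr[Z_1=1\mid\ProgW=w]$ rather than the global $\Lstar$; the computation then yields the intermediate bound $H(\Lstar)+\exvv{\ProgW}{\kld{\zeta_\ProgW}{\beta_\ProgW}}$, which you reduce to $H(\Lstar)+\kld{\Lstar}{\Ltilde}$ via joint convexity of KL and $\exv{\zeta_\ProgW}=\Lstar$. Your route avoids the chain-rule split and the partition-function step, and the intermediate inequality $\exvv{\ProgW}{\kld{\zeta_\ProgW}{\beta_\ProgW}}\ge\kld{\Lstar}{\Ltilde}$ is in fact a slightly sharper statement than what the paper records. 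The paper's decomposition, on the other hand, makes more explicit that exactly $\nsamples H(\Lstar)$ bits are consumed by encoding the noise (as $I(Y_1;\ProgW\mid X_1)$), matching the intuition in the paragraph preceding the proof. Your derivation of part~\eqref{eq:wellB} from part~\eqref{eq:wellA} is the same as the paper's (Lemma~\ref{lemma:strong_pinsker} together with $|L(A)-2\Lstar(1-\Lstar)|=(1-2\Lstar)\,|\Ltilde-\Lstar|$).
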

The proof is again information theoretic based on the following intuition: The agreement rate of $A(S)$ with $\hstar$ inside $S$ is exactly $\Lstar$.  If the agreement rate outside $S$ differs significantly, we can use it to construct a predictor for which $x$s are in $S$ and thus the output of $A(S)$ has information about the $X_i$s.  But $A(S)$ needs at least $m H(\Lstar)$ bits of information just for encoding the noise on the labels, and so if it's description length is not much more than $m H(\Lstar)$, it can't also contain information about which $x$s are in $S$ (i.e.~it doesn't have enough information capacity for also memorizing anything about the $X_i$s).

\begin{proof}
Denoting $\ProgW=A(S)$ as before, we have
\begin{align}\label{eq:IXYW}
    \exv{\abs{\ProgW}} \geq \nsamples I(X_1,Y_1;\ProgW) = \nsamples (X_1;\ProgW) + \nsamples I(Y_1;\ProgW|X_1)
\end{align}
where the inequality is the same as in the proof of Lemma \ref{lemma:agnostic_gen_mi}.  We evaluate: 
\begin{align}\label{eq:IY}
    I(Y_1;\ProgW|X_1) = H(Y_1|X_1) - H(Y_1|\ProgW,X_1) = H(\Lstar)-0 = H(\Lstar)
\end{align}
where in the second equality, the first term follows since $Y_1|X_1\sim\ber{\Lstar}$ based on the noise model, and the second is because $Y_1=\ProgW(X_1)$ is a deterministic function of $\ProgW,X_1$.  

In order to bound $I(X_1;\ProgW)$, it will be convenient to define $\tilde{\ProgW}$, which is a deterministic function of $\ProgW$ (and hence also a random variable) with $\tilde{\ProgW}(x)=\ProgW(x)\oplus\hstar(x)$ (recall $\hstar$ is fixed and deterministic here).  We will also denote
$\Ltilde = \exv{\prv{\ProgW(X)\neq\hstar(X)}} = \exv{\tilde{\ProgW}(X)}$ the disagreement probability we want to bound.  Now, to bound $I(X_1;\ProgW)$, we will use the same variational bound, this time with the proposal distribution:
\begin{align}
    dq_{X|\ProgW}(x|\progw) = \frac{1}{Z_\progw} \frac{p_{\ber{\Lstar}}(\tilde{\progw}(x))}{p_{\ber{\Ltilde}}(\tilde{\progw}(x))} dp(x) \label{eq:condition_distro}
\end{align}
where $p_{\ber{\alpha}}(0) =1-\alpha, p_{\ber{\alpha}}(1) =\alpha$ is the Bernoulli p.m.f., and again $p(x)$ is the true (population) marginal. This proposal distribution is the best we can do solely in terms of $\tilde{\progw}(x)$, since we know that inside $S$ we have $\tilde{\ProgW}(X_1)=\ProgW(X_1) \oplus \hstar(X_1) = Y_1 \oplus \hstar(X_1) \sim \ber{\Lstar}$ while for a random $X$,  $\tilde{\ProgW}(X)=\ProgW(X) \oplus \hstar(X)\sim\ber{\Ltilde}$, by definition of $\Ltilde$.  We can calculate the partition function:
\begin{align}
    Z_\progw = \exvv{X\sim p}{\frac{p_{\ber{\Lstar}}\left(\tilde{\progw}(X)\right)}{p_{\ber{\Ltilde}}\left(\tilde{\progw}(X)\right)}} 
    = \prvv{X}{\tilde{\progw}(X)=1} \cdot \frac{\Lstar}{\Ltilde} + \prvv{X}{\tilde{\progw}(X) =0} \cdot \frac{1-\Lstar}{1-\Ltilde} &\label{eq:zm}
\end{align}
Taking an expectation over $\ProgW$, we have $\exv{Z_{\ProgW}}=    \frac{\Lstar}{\Ltilde} \cdot \Ltilde + \frac{1-\Lstar}{1-\Ltilde} \cdot (1-\Ltilde) = 1$.
\removed{
\begin{align}
\exvv{S}{ Z_{\ProgW} } = 
    \exvv{\ProgW' \sim p_\ProgW}{Z(\ProgW')} &= \exvv{\ProgW' \sim p_\ProgW}{\prvv{X_i \sim p_X}{\ProgW'(X_i) \neq \hstar(X_i)} \cdot \frac{\Lstar}{\Ltilde} + \prvv{X_i \sim p_X}{\ProgW'(X_i) = \hstar(X_i)} \cdot \frac{1-\Lstar}{1-\Ltilde}} & \nonumber \\
    &= \frac{\Lstar}{\Ltilde} \cdot \exvv{\ProgW' \sim p_\ProgW}{\prvv{X_i \sim p_X}{\ProgW'(X_i) \neq \hstar(X_i)}} + \frac{1-\Lstar}{1-\Ltilde} \cdot \exvv{\ProgW' \sim p_\ProgW}{\ProgW'(X_i) = \hstar(X_i)} & \nonumber \\
    &=
    \frac{\Lstar}{\Ltilde} \cdot \Ltilde + \frac{1-\Lstar}{1-\Ltilde} \cdot (1-\Ltilde) = 1 & \label{eq:exv_z}
\end{align} }
Applying the variational bound (Lemma \ref{lemma:kl_conditional}) we have:
\begin{multline}
    I(X_1; \ProgW) 
    \ge \exvv{X_1,\ProgW}{\logv{\frac{dq(X_1|\ProgW)}{dp(X_1)}}} 
    = \exvv{X_1,\ProgW}{\logv{\frac{p_{\ber{\Lstar}}(\tilde{\ProgW}(X_1))}{p_{\ber{\Ltilde}}(\tilde\ProgW(X_1))}\cdot\frac{1}{Z_\ProgW}}} \\
    \geq \exvv{\tilde\ProgW(X_1) \sim \Ber(\Lstar)}{\logv{\frac{p_{\ber{\Lstar}}(\tilde\ProgW(X_i))}{p_{\ber{\Ltilde}}(\tilde\ProgW(X_i))}}}  - \logv{\exvv{\ProgW}{Z_\ProgW}} = \kld{\Lstar}{\Ltilde}
\removed{    
    &\ge \exvv{X_i,\ProgW \sim p_{X_i,\ProgW}}{\logv{\frac{p_{\ber{\Lstar}}(\ProgW(X_i))}{p_{\ber{\Ltilde}}(\ProgW(X_i))}}} + \logv{\exvv{X_i,\ProgW \sim p_{X_i,\ProgW}}{\frac{1}{Z(\ProgW)}}}\nonumber  & \text{(Concavity of $\log$)} \\
    &= \exvv{\ProgW(X_i) \sim \Ber(\Lstar)}{\logv{\frac{p_{\ber{\Lstar}}(\ProgW(X_i))}{p_{\ber{\Ltilde}}(\ProgW(X_i))}}} + \log 1 = \kld{\Lstar}{\Ltilde} & \text{(Equation \ref{eq:exv_z})}}
    \label{eq:old_lemma_44} 
\end{multline}
Where the inequality is due to Jensen on the second term, and we then use $\exv{Z_\ProgW}=1$ to cancel it. 

Plugging in \eqref{eq:old_lemma_44} and \eqref{eq:IY} into \eqref{eq:IXYW} yields part \eqref{eq:wellA} of the Lemma.  To obtain part \eqref{eq:wellB},  we first use the inequality $\abs{\beta-\alpha}\leq 2 \kld{\alpha}{\beta} + \sqrt{2\alpha \kld{\alpha}{\beta}}$ (Lemma \ref{lemma:strong_pinsker}) to obtain $\abs{\Ltilde-\Lstar}\leq 2 \rhs + \sqrt{2 \Lstar \rhs}$, where $\rhs$ is the right hand side of part \eqref{eq:wellA}.   And since $L(A)=\Ltilde(1-\Lstar)+(1-\Ltilde)\Lstar$, we have $\abs{L(A)-2\Lstar(1-\Lstar)}=(1-2\Lstar)\abs{\Ltilde-\Ltilde}\leq \abs{\Ltilde-\Ltilde}$.  Combining the two inequalities yields part \eqref{eq:wellB}.
\end{proof}

% \removed{
% The expected generalization error of $\ProgW$ is
% \begin{align*}
%     &\exvv{\trainset}{\Lstar \cdot \prvv{\trainset, (X,Y)}{\hstar(X)=\ProgW(X)} + \inparen{1-\Lstar}\cdot\prvv{\trainset,(X,Y)\sim\cD}{\hstar(X) \neq \ProgW(X)}} \\
%     = &\Ltilde \cdot (1-2\Lstar) + \Lstar
% \end{align*}
% Subtract $2\Lstar(1-\Lstar)$ from both sides, and continue:
% \begin{align*}
%     L(A(\trainset)) - 2\Lstar(1-\Lstar) &= \inparen{\Ltilde-\Lstar}\inparen{1-2\Lstar} & \\
%     \abs{L(A(\trainset)) - 2\Lstar(1-\Lstar)} &= \abs{\Ltilde-\Lstar}\inparen{1-2\Lstar} \le \abs{\Ltilde-\Lstar}
% \end{align*}
% It remains to bound $\abs{\Ltilde-\Lstar}$. By Lemma \ref{lemma:strong_pinsker}, we have
% \begin{align*}
%     \abs{\Ltilde-\Lstar} \le \sqrt{2\Lstar\kld{\Lstar}{\Ltilde}} + 2\kld{\Lstar}{\Ltilde}
% \end{align*}
% The second statement of Lemma \ref{lemma:mutual_info_uc} now follows from using the upper bound on $\kld{\Lstar}{\Ltilde}$ we have from the first conclusion of Lemma \ref{lemma:mutual_info_uc}.

% Plugging \ref{corollary:mdl_length} into Lemma \ref{lemma:mutual_info_uc} yields Theorem  \ref{thm:random_label_noise_main}.}

\noindent Plugging in Corollary \ref{corollary:mdl_length} into Part \eqref{eq:wellB} of Lemma \ref{lemma:mutual_info_uc} yields Theorem \ref{thm:random_label_noise_main}. \qed
\section{Tightness and Discussion}
\label{sec:discussion}

For MDL interpolation in the presence of random label noise, we provide a precise characterization of the effect of overfitting.  In this case, unlike the optimally tuned SRM, which converges to the Bayes optimal predictor, the interpolating $\mdl$ predictor will converge to sampling from the posterior, yielding up to twice the Bayes error.  This is similar to the behavior of a 1-nearest-neighbor rule (although the actual predictions will of course be very different), the observed behavior of certain neural networks \cite{nakkiran2020distributional}, and perhaps kernels \cite{mallinar2022benign}.  This is a ``tempered'' behavior, where for any non-trivial Bayes error $0\leq \Lstar \leq 0.5$, the limiting MDL error $\Lstar<\gsamp(\Lstar)<0.5$ is strictly worse than Bayes, but still provides an edge over random guessing.

In the more general agnostic case, we give only an upper bound, depicted in Figure \ref{fig}.  Although strictly worse than the sampling behavior with random label noise, this behavior is still tempered (Corollary \ref{cor:limit}): if some computable function has non-trivial error $L(\prog)<0.5$, the optimally tuned $SRM$ will converge to at most this error, and MDL might suffer due to overfitting, but we will still yield (as $\nsamples\rightarrow\infty$) an edge over random guessing and error at most $\gagg(L(\prog))<0.5$.  

\paragraph{Tempered Behavior with Finite Samples} An important feature of our results is that we do not look only at the asymptotic behavior, but ask also about the effect of overfitting with a finite number of samples, and how we compare to the finite-sample agnostic SRM guarantee \eqref{eq:srm}.  In particular, with finite $\nsamples$, the competitor $\prog$ with which we want to compete (i.e.~the one minimizing the right hand side of \eqref{eq:srm}) might be different and depend on $\nsamples$.  Indeed,  our finite sample agnostic guarantee (Theorem \ref{thm:agnostic_main} shows that we can compete with the $\nsamples$-dependent $\prog$ with which SRM competes, with a ``tempered'' effect on the error.  This is similar in spirit to the study of how minimum norm interpolation can adapt the approximation error to the sample complexity as recently studied by \citet{misiakiewicz2022spectrum,xiao2022precise}.

\paragraph{Tightness of Agnostic Guarantee} One might ask whether our agnostic upper bound is tight and whether it is possible to identify its exact behavior.

First, we point out that $\mdl$ might yield limiting error anywhere between the Bayes error and the error of the sampling predictor, i.e.~anywhere in the red region between the Bayes optimal line and sampling curve in Figure \ref{fig}.  \removed{More precisely, for any Bayes error $0<\Lstar<0.5$, and any error $\Lstar \leq L_{\mdl} \leq 2\Lstar(1-\Lstar)$, we can describe a distribution where $L(MDL) \stackrel{\nsamples\rightarrow\infty}{\rightarrow}L_{\mdl}$.}  To see this, consider a source distribution where $X[1]\sim\ber{\alpha}$, the remaining bits of $X$ are i.i.d.$\ber{0.5}$, and $Y=X[2]$ if $X[1]=0$, but $Y=\ber{\beta}$ if $X[1]=1$.  It is easy to verify that $\Lstar=\alpha\beta$ while $L(MDL) \stackrel{\nsamples\rightarrow\infty}{\rightarrow}L_{\mdl} = 2\alpha\beta(1-\beta)$, which allows us to get any $0\leq \Lstar \leq L_{\mdl} \leq \gsamp(\Lstar) \leq 0.5$ by varying $\alpha$ and $\beta$.  This is the same sampling behavior and same asymptotic error that will be reaches by other sampling-type over-fitting predictors, such as 1-nearest-neighbor.

We do not know whether there are source distributions for which $\mdl$ will yield errors above the sampling curve $\gsamp$ (the green region in Figure \ref{fig}), or whether the difference between $\gsamp$ and $\gagg$ is due to a looseness in Theorem \ref{thm:agnostic_main}.  In Sections \ref{sec:interpolation} and Section \ref{sec:generalization} we argued that the description length bound in Corollary \ref{corollary:mdl_length} and the generalization bound in terms of program length in Lemma \ref{lemma:agnostic_gen_mi} are tight.  This implies our proof technique, which separately asks what length programs we need to consider and then uses what is essentially a uniform generalization guarantee for all short programs, cannot improve beyond Theorem \ref{thm:agnostic_main} (in the agnostic case).  But although this proof technique cannot be improved, it is possible that by analyzing specific properties of the $\mdl$, it is possible to significantly strengthen \ref{thm:agnostic_main}, perhaps replacing $\gagg$ with $\gsamp$ also in the agnostic case, and we leave this as an open question.

It is useful to note that if the posterior $\eta(x)=P(Y=1|x)$ is computable, $\mdl$ should also converge to a sampling classifier and yield limiting error $L(\mdl) \stackrel{\nsamples\rightarrow\infty}{\rightarrow}L_{\mdl} \leq \gsamp(L^*)$ where $\Lstar$ is the Bayes error.  In fact, we suspect it is possible to generalize Theorem \ref{thm:random_label_noise_main} to show that:
\begin{equation}
    \exvv{X}{\kld{\eta(X)}{\prvv{S}{\mdl(S)(X)=1}}} \leq  \frac{\abs{\eta} + O(\log \nsamples + \log \bar\xlength(\nsamples) )}{\nsamples}, 
\end{equation}
where $\abs{\eta}$ is the description length of the (computable) posterior $\eta$. 
This is a more general situation than random label noise added to a computable Bayes optimal predictor, where $\eta(x)=\Lstar + \prog^{\star}(1-2\Lstar)$.  The scenario where MDL might yield error above $\gsamp(L(\prog^{\star}))$, is thus when the Bayes predictor $\prog^{\star}(x) = \sign(\eta(X)-0.5)$ is computable, but the posterior $\eta(x)$ itself is not.  Even without getting to non-computability, we can consider a situation where the Bayes optimal predictor has a very short description, but the posterior requires a much longer program, and ask whether this would result in large gaps between the optimally balanced SRM and the interpolating $\mdl$.

\paragraph{Different Notions of Description Length or Different Inductive Bias}  We considered $\mdl$ learning in the Turing or Kolmogorov sense, i.e.~by minimizing program length.  This is arguably the most general notion, if we would like the learned predictor to actually be computable.  Still, one can instead think more abstractly of logical descriptions that allow quantifiers over infinite domains.  Our results hold also in these more general settings, or any other notion that subsumes or extend Turing computation.  More specifically, all we require from the notion of description is that we can describe ``lexicographically first function satisfying Lemma \ref{lemma:hash_table_exists}.''  

Alternatively, one might consider more limited notions of description, e.g.~limiting to only programs with short runtime, and considering the learning rule\footnote{While still abstract, the learning rule $\mathsf{MinRuntime}$ is more useful as a reference universal rule, since we want our predictor to not merely be computable, but also be tractable with reasonable runtime \cite{valiant1984theory}.  Additionally, $\mathsf{MinRuntime} \in \mathrm{NP}$, and for all we know might be poly-time computable, unlike $\mdl$ which is uncomputable.} $\mathsf{MinRuntime}$ that selects the program with the minimal (worst case) runtime that interpolates $S$.  Or almost equivalently (up to some polynomial relationship), limiting to neural networks and considering the learning rule $\mathsf{MinNetwork}$ which returns the neural network\footnote{More formally, we fix the activation function, e.g.~to ReLU activation, and search over all architecture graphs and all edge weights.} with the minimal number of edges that interpolates the training set.  Our analysis does not apply to $\mathsf{MinRuntime}$ or $\mathsf{MinNetwork}$ since the short program we construct has double-exponential runtime.  
An explicit and efficiently computable pseudo-random generator, generating $N$ bits that are (approximately) $\nsamples$-way independent and marginally $\ber{\alpha}$ using a seed length of $\nsamples\cdot H(\alpha) + O( \log m + \log\log N)$ (or even a worse dependence on $N$), would allow extending our results also to min-runtime or min-size-neural-net interpolation.   

More generally, our analysis can be viewed as providing a sufficient condition on an inductive bias $c(\prog)$ such that minimum-$c(\prog)$ interpolation exhibit tempered overfitting:  roughly speaking, as long as the inductive bias allows us to encode ``random function'' with capacity (i.e.~the capacity of the sublevel set of $c(\cdot)$ containing these random functions) not much larger than the capacity of the set of such ``random functions'', it should be the case that minimum-$c(\prog)$ interpolation is tempered in the sense of Theorems \ref{thm:agnostic_main} and \ref{thm:random_label_noise_main}.

\paragraph{Tightness of Dependence on the Disambiguation Prefix Length}  Another open technical question is whether the mild dependence on the quenched disambiguation prefix length in Theorems \ref{thm:agnostic_main} and \ref{thm:random_label_noise_main} is necessary.  Again, we argue that it is necessary for bounding the description length, and so for our proof technique.  But the examples which require long programs due to the differences between instances being hidden in far-away and hard-to-describe bits, do not show these long programs do not generalize well.  We do not know and leave it open whether the dependence on $\overline{\xlength}$ in Theorems \ref{thm:agnostic_main} and \ref{thm:random_label_noise_main} is necessary, or whether different techniques and specific analysis of the MDL can avoid these.

\removed{
\paragraph{Descriptions vs Programs} In this paper we formalized ``descriptions'' as programs in a Touring complete language.  One might consider also more general, and uncomputable, notions of descriptions, e.g.~involving quantifiers over infinite domains.  Our results hold also for such notions, as our only requirement is tha}

%%%%%%%%%%%%%%%%%

\paragraph{Summary} With the growing interest in noisy interpolation learning, and obtaining an understanding and characterization of the ``benignness'' and/or harm of overfitting, we find it instructive to consider what is perhaps the most basic and fundamental learning principal, with roots going back to the first discussions of machine learning and inductive inference \cite{solomonoff1960preliminary}.  We hope that our study will help direct our search for the fundamental principles of what ``makes'' overfitting benign or catastrophic.  We would also like to see our tempered finite sample agnostic guarantee (Theorem \ref{thm:agnostic_main}) as a template for studying  how overfitting compares with the optimally balanced approached (the SRM guarantee of \eqref{eq:srm} in our case), instead of focusing on comparing the asymptotic behavior and seeking consistency, which is frequently less relevant for learning.

\subsection*{Acknowledgments} We thank David McAllester for discussions on $\mdl$ learning, and Madhur Tulsiani and Roei Tell for discussions on random number generators.  We are especially grateful to Alexander Razborov for pointing out the argument in the lower bound in terms $\xlength$ (end of Section \ref{sec:interpolation}), which led to the proof approach for Theorem \ref{thm:short_hash_table} using enumeration as a method of description.

This research was supported in part by the NSF-Simons Collaboration on the Mathematics of Deep Learning and an NSF-Tripod-supported Institute for Data Economics and Algorithms. NSM was funded in part by a United States NSF Graduate Research Fellowship.

% \nocite{*}
\printbibliography

\newpage

\appendix
\section{Information Theoretic Identifies and Inequalities}
\label{app:info_theory}

We present and either cite or prove several identities and inequalities we use in our proofs.

\begin{lemma}[Chain Rule of Mutual Information; see p. 42 of \cite{cover2006elements}]
\label{lemma:mutual_info_chain}
For any random variables $A_1, A_2$, and $B$:
\begin{align*}
    I((A_1,A_2); B) = I(A_2; B | A_1) + I(A_1; B)
\end{align*}
\end{lemma}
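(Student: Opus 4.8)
The plan is to reduce this identity to the chain rule for Shannon entropy, $H(A_1,A_2)=H(A_1)+H(A_2\mid A_1)$, together with its conditional version $H(A_1,A_2\mid B)=H(A_1\mid B)+H(A_2\mid A_1,B)$, both of which follow in one line from the definition of conditional entropy. First I would rewrite each term on the right-hand side via $I(U;V)=H(U)-H(U\mid V)$ and the conditional analogue $I(U;V\mid W)=H(U\mid W)-H(U\mid V,W)$, obtaining
\[
I(A_1;B)+I(A_2;B\mid A_1)=\bigl(H(A_1)-H(A_1\mid B)\bigr)+\bigl(H(A_2\mid A_1)-H(A_2\mid A_1,B)\bigr).
\]
Next I would regroup the four entropy terms as $\bigl(H(A_1)+H(A_2\mid A_1)\bigr)-\bigl(H(A_1\mid B)+H(A_2\mid A_1,B)\bigr)$ and collapse each parenthesized pair using the (conditional) entropy chain rule, leaving $H(A_1,A_2)-H(A_1,A_2\mid B)$, which is precisely $I((A_1,A_2);B)$ by definition.

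An equivalent route, more in the spirit of the information-density manipulations used elsewhere in the paper, is to start from $I((A_1,A_2);B)=\E\left[\log\frac{dp(A_1,A_2\mid B)}{dp(A_1,A_2)}\right]$, factor the numerator as $dp(A_1\mid B)\,dp(A_2\mid A_1,B)$ and the denominator as $dp(A_1)\,dp(A_2\mid A_1)$, split the logarithm into two pieces, and recognize the two resulting expectations as $I(A_1;B)$ and $I(A_2;B\mid A_1)$ respectively.

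There is essentially no real obstacle here: the statement is classical (it is exactly the reference cited, Cover--Thomas, p.\ 42), and the only minor care needed is that all the conditional distributions involved are well-defined — which holds for the random variables considered — and, in the density-based argument, that the factorizations hold almost surely. In practice I would simply cite it rather than reproduce the proof.
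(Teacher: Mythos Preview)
Your proposal is correct, and in fact the paper does not prove this lemma at all: it simply states it with a citation to Cover--Thomas, exactly as you suggest doing in your final sentence. The entropy chain-rule derivation you outline is the standard argument and would be an acceptable proof if one were required.
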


\begin{lemma}
\label{lemma:kl_conditional}
Let $A$ and $B$ be any two random variables with associated marginal distributions $p_A$, $p_B$, and joint $p_{A,B}$. Let $q_{A|B}$ be any conditional distribution (i.e.~such that for any $b$, $q_{A|B}(\cdot,b)$ is a normalized non-negative measure). Then:
\begin{align*}
    I(A;B) \ge \exvv{A,B \sim p_{A,B}}{\logv{\frac{dq_{A|B}(A|B)}{dp_A(A)}}}
\end{align*}
\end{lemma}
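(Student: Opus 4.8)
The plan is to recognize Lemma~\ref{lemma:kl_conditional} as the conditional form of the standard variational (Gibbs-type) lower bound on mutual information, and to prove it by reducing to the non-negativity of KL divergence. Writing $p_{A\mid B}$ for the \emph{true} conditional distribution of $A$ given $B$, I would start from the Radon--Nikodym form of the mutual information, $I(A;B)=\exvv{A,B\sim p_{A,B}}{\logv{\frac{dp_{A\mid B}(A\mid B)}{dp_A(A)}}}$, and subtract the claimed right-hand side. Combining logarithms and using linearity of expectation gives
\[
I(A;B)-\exvv{A,B\sim p_{A,B}}{\logv{\frac{dq_{A\mid B}(A\mid B)}{dp_A(A)}}}
=\exvv{A,B\sim p_{A,B}}{\logv{\frac{dp_{A\mid B}(A\mid B)}{dq_{A\mid B}(A\mid B)}}}
=\exvv{B\sim p_B}{\kld{p_{A\mid B}(\cdot\mid B)}{q_{A\mid B}(\cdot\mid B)}}.
\]
Each inner term is a KL divergence from a probability distribution to a normalized non-negative measure, hence $\ge 0$ by Gibbs' inequality; taking the expectation over $B\sim p_B$ preserves non-negativity, which is exactly the claim.

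If one prefers not to invoke Gibbs' inequality as a black box, the same bound follows from a single application of Jensen's inequality to the concave function $\log$: starting from the first equality above,
\[
\exvv{A,B\sim p_{A,B}}{\logv{\frac{dq_{A\mid B}(A\mid B)}{dp_A(A)}}}-I(A;B)
=\exvv{A,B\sim p_{A,B}}{\logv{\frac{dq_{A\mid B}(A\mid B)}{dp_{A\mid B}(A\mid B)}}}
\le \logv{\exvv{A,B\sim p_{A,B}}{\frac{dq_{A\mid B}(A\mid B)}{dp_{A\mid B}(A\mid B)}}}.
\]
Evaluating the inner expectation by first conditioning on $B$, $\exvv{A\sim p_{A\mid B}}{\frac{dq_{A\mid B}(A\mid B)}{dp_{A\mid B}(A\mid B)}}=\int dq_{A\mid B}(a\mid B)=1$, so the outer expectation over $B$ is also $1$ and the right-hand side is $\log 1=0$; rearranging yields the lemma, with equality precisely when $q_{A\mid B}(\cdot\mid b)=p_{A\mid B}(\cdot\mid b)$ for $p_B$-almost every $b$.

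The only point requiring a little care --- and not really an obstacle --- is measure-theoretic bookkeeping: these manipulations presuppose $p_{A\mid B}(\cdot\mid b)\ll p_A$ and $p_{A\mid B}(\cdot\mid b)\ll q_{A\mid B}(\cdot\mid b)$ so that the relevant densities exist. When absolute continuity with respect to $q_{A\mid B}$ fails on a set of positive $p_B$-measure, the corresponding KL term is $+\infty$ and the inequality is trivially true; otherwise the chain-rule manipulation of Radon--Nikodym derivatives above is valid. I would state this caveat once and carry out the computation as written, presenting the KL-divergence version as the main argument.
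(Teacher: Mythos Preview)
Your proof is correct and follows essentially the same approach as the paper: both start from $I(A;B)=\exvv{A,B}{\log\frac{dp_{A\mid B}}{dp_A}}$, insert $q_{A\mid B}$ via a multiply-and-divide (equivalently, subtract the claimed right-hand side), and conclude by recognizing the remainder as $\exvv{B}{\kld{p_{A\mid B}}{q_{A\mid B}}}\ge 0$. Your additional Jensen-based alternative and measure-theoretic remarks are extra commentary but do not change the underlying argument.
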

\begin{proof}
\removed{
We use the non-negativity of the the KL divergence to write
\begin{align*}
    0 \le \exvv{B \sim p_B}{\kld{p_{A|B}}{q_{A|B}}} &= \exvv{B \sim p_B}{\exvv{A \sim p_{A | B}}{\logv{\frac{dp_{A|B}(A|B)}{dq_{A|B}(A|B)}}}} \\
    &= \exvv{A,B\sim p_{A,B}}{\logv{\frac{dp_{A|B}(A|B)}{dp_A(A)}}+\logv{\frac{dp_A(A)}{dq_{A|B}(A|B)}}} \\
    &= \kld{p_{A|B}}{p_A} - \exvv{A,B \sim p_{A,B}}{\logv{\frac{dq_{A|B}(A|B)}{dp_A(A)}}} \\
    &= I(A;B) - \exvv{A,B \sim p_{A,B}}{\logv{\frac{dq_{A|B}(A|B)}{dp_A(A)}}}
\end{align*}
}
The proof essentially uses the chain rule for KL-divergence:
\begin{align}
I(A;B) & = \kld{p_{A|B}}{p_A}  = \exvv{A,B\sim p_{A,B}}{\logv{\frac{dp_{A|B}(A|B)}{dp_A(A)}}} \\
& = \exvv{A,B\sim p_{A,B}}{\logv{\frac{dp_{A|B}(A|B)}{dp_A(A)}
\cdot \frac{dq_{A|B}(A|B)}{dq_{A|B}(A|B)} }} \\
&= \exvv{A,B\sim p_{A,B}}{\logv{\frac{dq_{A|B}(A|B)}{dp_A(A)}}}  
+ \exvv{A,B\sim p_{A,B}}{\logv{\frac{dp_{A|B}(A|B)}{dq_{A|B}(A|B)}}} \\
& = \exvv{A,B\sim p_{A,B}}{\logv{\frac{dq_{A|B}(A|B)}{dp_A(A)}}}
+ \exvv{B \sim p_B}{\kld{p_{A|B}}{q_{A|B}}} \\
&\geq \exvv{A,B\sim p_{A,B}}{\logv{\frac{dq_{A|B}(A|B)}{dp_A(A)}}}
\end{align}
where the inequality follows from the non-negativity of the KL divergence.
\end{proof}

\begin{lemma}
\label{lemma:mutual_info_add}
Let $A_1, A_2, B$ be random variables where $A_1$ and $A_2$ are independent. Then 
\begin{align*}
    I((A_1,A_2); B) \ge I(A_1; B) + I(A_2; B)
\end{align*}\end{lemma}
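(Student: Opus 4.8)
The plan is to reduce the statement to two invocations of the chain rule for mutual information (Lemma \ref{lemma:mutual_info_chain}) together with the non-negativity of conditional mutual information, using the independence hypothesis exactly once. First I would apply Lemma \ref{lemma:mutual_info_chain} directly to peel off $A_1$:
\begin{align*}
    I((A_1,A_2);B) = I(A_1;B) + I(A_2;B\mid A_1).
\end{align*}
Thus it suffices to establish the single inequality $I(A_2;B\mid A_1) \ge I(A_2;B)$, i.e.~that conditioning on the side information $A_1$ cannot decrease the information $A_2$ carries about $B$.

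To prove this, I would apply Lemma \ref{lemma:mutual_info_chain} a second time, now in the other argument slot, treating the pair $(A_1,B)$ as the variable correlated with $A_2$:
\begin{align*}
    I(A_2;(A_1,B)) = I(A_2;A_1) + I(A_2;B\mid A_1).
\end{align*}
Here the independence hypothesis $A_1 \perp A_2$ forces $I(A_2;A_1)=0$, so $I(A_2;B\mid A_1) = I(A_2;(A_1,B))$. One further application of the chain rule, this time peeling off $B$ first, gives $I(A_2;(A_1,B)) = I(A_2;B) + I(A_2;A_1\mid B) \ge I(A_2;B)$, since conditional mutual information is non-negative. Chaining the displays yields $I(A_2;B\mid A_1) \ge I(A_2;B)$ and hence the claimed bound $I((A_1,A_2);B) \ge I(A_1;B) + I(A_2;B)$.

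I do not expect any genuine obstacle here: the only points requiring a bit of care are applying the chain rule in the ``observed'' slot (on $I(A_2;(A_1,B))$) and noting that the independence assumption is precisely what kills the interaction term $I(A_1;A_2)$. If one prefers a discrete entropy-based derivation, the key inequality also follows from $H(A_2\mid A_1)=H(A_2)$ (independence) and $H(A_2\mid A_1,B)\le H(A_2\mid B)$ (conditioning does not increase entropy), giving $I(A_2;B\mid A_1) = H(A_2)-H(A_2\mid A_1,B) \ge H(A_2)-H(A_2\mid B) = I(A_2;B)$; the mutual-information phrasing is preferable only in that it avoids any subtlety with differential entropy when the $A_i$ are not discrete, which is relevant since this lemma is applied in the excerpt with $A_i$ built from (possibly non-atomic) inputs $X_i$.
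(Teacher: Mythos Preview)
Your argument is correct, but it is not the route the paper takes. The paper proves the lemma directly from the variational bound of Lemma~\ref{lemma:kl_conditional}: it plugs in the factored proposal $q_{A_1,A_2\mid B}=p_{A_1\mid B}\cdot p_{A_2\mid B}$, uses the independence hypothesis to factor the marginal $p_{A_1,A_2}=p_{A_1}\cdot p_{A_2}$, and the log-ratio then splits additively into $I(A_1;B)+I(A_2;B)$. Your proof instead iterates the chain rule (Lemma~\ref{lemma:mutual_info_chain}) and the non-negativity of conditional mutual information, which is the more textbook derivation and has the advantage of not invoking Lemma~\ref{lemma:kl_conditional} at all. The paper's version, on the other hand, is a one-line application of the same variational tool that drives the main generalization proofs in Section~\ref{sec:generalization}, so it keeps the appendix in the same idiom. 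Either way, the independence assumption enters exactly once: for you it kills $I(A_1;A_2)$, for the paper it factors $p_{A_1,A_2}$.
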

\begin{proof}%[Proof of Lemma \ref{lemma:mutual_info_add}]
We use Lemma \ref{lemma:kl_conditional} with the conditional distribution $q_{A_1,A_2|B} = p_{A_1|B} \cdot p_{A_2|B}$:
\begin{align}
    I((A_1,A_2);B) &\ge \exvv{A_1,A_2,B}{\logv{\frac{dp_{A_1|B}(A_1|B) \cdot dp_{A_2|B}(A_2|B)}{dp_{A_1,A_2}(A_1,A_2)}}} \\
    &= \exvv{A_1,B}{\logv{\frac{dp_{A_1|B}(A_1|B)}{dp_{A_1}(A_1)}}} + \exvv{A_2,B}{\logv{\frac{dp_{A_2|B}(A_2|B)}{dp_{A_2}(A_2)}}} \\
    &= I(A_1;B) + I(A_2;B) \notag \qedhere
\end{align}
%This concludes the proof of Lemma \ref{lemma:mutual_info_add}.
\end{proof}

\begin{lemma}
\label{lemma:ag_uc_mathematica} % i have no better name for this...
For $C\geq 0$ and $0\leq \alpha\leq 1$, $1-2^{-H(\alpha)-C} \leq \gagg(\alpha)+C$.
% \begin{align*}
%     \quad 1-2^{-H(\alpha)-C} \leq \gagg(\alpha)+C
% \end{align*}
\end{lemma}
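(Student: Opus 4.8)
The plan is to decouple the two variables $\alpha$ and $C$ and reduce the claim to an elementary one-variable estimate. Writing $\gagg(\alpha) = 1 - 2^{-H(\alpha)}$, the desired inequality $1 - 2^{-H(\alpha)-C} \le \gagg(\alpha) + C$ rearranges (adding $2^{-H(\alpha)-C}$ and subtracting $1$ from both sides, then negating) to
\begin{equation*}
2^{-H(\alpha)}\bigl(1 - 2^{-C}\bigr) \le C.
\end{equation*}
So the first step is just this algebraic manipulation; nothing is lost since it is an equivalence.

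Next, since $0 \le \alpha \le 1$ we have $H(\alpha) \ge 0$, hence $2^{-H(\alpha)} \le 1$; and since $C \ge 0$ we have $1 - 2^{-C} \ge 0$. Therefore the left-hand side is at most $1 - 2^{-C}$, and it suffices to prove $1 - 2^{-C} \le C$ for all $C \ge 0$. For this I would write $1 - 2^{-C} = 1 - e^{-C\ln 2}$ and invoke the standard inequality $1 - e^{-x} \le x$ (valid for all real $x$, e.g.\ because $e^{-x} \ge 1 - x$), giving $1 - 2^{-C} \le C\ln 2 \le C$. Chaining the bounds yields $2^{-H(\alpha)}(1-2^{-C}) \le 1 - 2^{-C} \le C$, which is exactly what was needed.

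There is no real obstacle here: the only ``insight'' is noticing that the $\alpha$-dependence enters only through the factor $2^{-H(\alpha)} \in [0,1]$, which can be bounded by $1$ uniformly, so the whole statement collapses to the convexity fact $e^{-x} \ge 1-x$. One could alternatively argue the final one-variable inequality by noting $f(C) = C - 1 + 2^{-C}$ satisfies $f(0)=0$ and $f'(C) = 1 - 2^{-C}\ln 2 \ge 1 - \ln 2 > 0$, so $f$ is increasing and hence nonnegative on $[0,\infty)$; either route is routine.
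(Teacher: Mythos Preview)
Your proof is correct. The algebraic rearrangement to $2^{-H(\alpha)}(1-2^{-C})\le C$ is exactly right, and bounding $2^{-H(\alpha)}\le 1$ together with $1-2^{-C}\le C\ln 2\le C$ finishes it cleanly.

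The paper takes a different route: it introduces the auxiliary function $g(a)\coloneqq -\log_2(1-a)-a$, shows $g$ is increasing on $[0,1)$ (using $(1-a)\ln 2<1$, which again is just $\ln 2<1$), and then applies $g(\beta)\ge g(\gamma)$ with $\beta=1-2^{-H(\alpha)-C}$ and $\gamma=1-2^{-H(\alpha)}$. If you unwind that comparison you land on exactly your rearranged inequality $C\ge 2^{-H(\alpha)}(1-2^{-C})$, so the two arguments meet at the same place; the difference is that the paper proves it via monotonicity of $g$ between two specific points, whereas you decouple $\alpha$ and $C$ immediately by the crude bound $2^{-H(\alpha)}\le 1$ and reduce to a one-variable estimate. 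Your approach is shorter and more transparent; the paper's approach yields a slightly more general intermediate inequality in the variable $\beta$ (their display (\ref{eq:ag_uc_mathematica_interm})), though that extra generality is not used elsewhere.
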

\begin{proof}%[Proof of Lemma \ref{lemma:ag_uc_mathematica}]
We first prove that for all $\alpha, \beta \in (0,1)$ such that $\beta \ge 1-2^{-H(\alpha)}$, we have
\begin{align}
    \logv{\frac{1}{1-\beta}} - H(\alpha) \ge \beta - \inparen{1-2^{-H(\alpha)}}\label{eq:ag_uc_mathematica_interm}
\end{align}
Let $g(a) \coloneqq -\logv{1-a}-a$. Notice that the derivative of $g(a)$ is $g'(a) = -1 + \inparen{\ln(2)-a\ln(2)}^{-1}$.

First, we show that for all $a \in (0,1)$, we have $g(a) \ge 0$. We do so by showing that $g(0) = 0$ and that $g(a)$ is increasing on $a \in (0,1)$. It is easy to see that equality is achieved at $a=0$, so it is enough to show that $g'(a) \ge 0$ for all $a \ge 0$. This follows immediately since $\ln(2) < 0$. 

Next, we analyze $g(\beta) - g(1-2^{-H(\alpha)})$. Since $g(\cdot)$ is nonnegative and increasing, and since we assume $\beta \ge 1-2^{-H(\alpha)}$, we have $g(\beta) - g(1-2^{-H(\alpha)}) \ge 0$. Inequality \ref{eq:ag_uc_mathematica_interm} follows from expanding the definition of $g(\cdot)$ and rearranging. 

We now turn to proving the statement of Lemma \ref{lemma:ag_uc_mathematica}. Set $\beta = 1-2^{-H(\alpha)-C}$ and notice that
\begin{equation*}
    1-2^{-H(\alpha)-C} = \beta \le  \logv{\frac{1}{1-\beta}} - H(\alpha) + \gagg(\alpha) = \inparen{1-2^{-H(\alpha)}} + C = \gagg(\alpha) + C 
    \qedhere
\end{equation*}
%This concludes the proof of Lemma \ref{lemma:ag_uc_mathematica}.
\end{proof}

\begin{lemma}[Following \citet{david_pacbayes}, page 4]
\label{lemma:strong_pinsker}
Let $\alpha, \beta \in [0,1]$. Then $$\abs{\beta-\alpha} \le \sqrt{2\alpha\kld{\alpha}{\beta}} + 2\kld{\alpha}{\beta}.$$
\end{lemma}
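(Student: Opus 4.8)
The plan is to reduce the statement to the single inequality
\[
\kld{\alpha}{\beta}\ \ge\ \frac{(\beta-\alpha)^2}{2\max(\alpha,\beta)},
\]
a mild strengthening of Pinsker's inequality for Bernoulli distributions, and then to finish with elementary algebra. Before doing this I would dispatch the degenerate cases: if $\alpha$ or $\beta$ lies in $\{0,1\}$ the claim is immediate (either $\kld{\alpha}{\beta}=\infty$, or both sides vanish, or $\abs{\beta-\alpha}\le 1 \le 2\kld{\alpha}{\beta}$ since $\kld{\alpha}{\beta}\ge 1/2$ in that regime), so we may assume $\alpha,\beta\in(0,1)$. I would also note that it suffices to argue with $\kld{\cdot}{\cdot}$ measured in nats: the right-hand side of the Lemma is increasing in $\kld{\alpha}{\beta}$, and measuring in bits (base-$2$) only makes $\kld{\alpha}{\beta}$ larger, so the bound is only easier there.

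To prove the displayed lower bound, fix $\alpha\in(0,1)$ and treat $\beta$ as the variable. The computation $\partial_\beta\kld{\alpha}{\beta}=-\tfrac{\alpha}{\beta}+\tfrac{1-\alpha}{1-\beta}=\tfrac{\beta-\alpha}{\beta(1-\beta)}$ is the workhorse. For $\beta\ge\alpha$, set $g(\beta)\coloneqq\kld{\alpha}{\beta}-\tfrac{(\beta-\alpha)^2}{2\beta}$; then $g(\alpha)=0$ and, after differentiating the quadratic term, $g'(\beta)=\tfrac{\beta-\alpha}{\beta}\inparen{\tfrac{1}{1-\beta}-\tfrac{\alpha+\beta}{2\beta}}\ge 0$, since $\tfrac{1}{1-\beta}\ge 1$ and $\tfrac{\alpha+\beta}{2\beta}\le 1$ (using $\alpha\le\beta$). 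Hence $g\ge 0$ on $[\alpha,1)$. For $\beta\le\alpha$, set instead $g(\beta)\coloneqq\kld{\alpha}{\beta}-\tfrac{(\alpha-\beta)^2}{2\alpha}$; then $g(\alpha)=0$ and $g'(\beta)=(\beta-\alpha)\inparen{\tfrac{1}{\beta(1-\beta)}-\tfrac{1}{\alpha}}\le 0$, since $\beta-\alpha\le 0$ and $\beta(1-\beta)<\beta<\alpha$ forces the bracket to be positive; so $g$ is nonincreasing to the left of $\alpha$ and $g\ge 0$ on $(0,\alpha]$. Combining the two cases gives the claimed Bernoulli-KL lower bound.

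With $D\coloneqq\kld{\alpha}{\beta}$, the conclusion now splits cleanly. If $\beta\le\alpha$, then $\max(\alpha,\beta)=\alpha$ gives $(\alpha-\beta)^2\le 2\alpha D$, hence $\abs{\beta-\alpha}\le\sqrt{2\alpha D}\le\sqrt{2\alpha D}+2D$. If $\beta>\alpha$, write $\Delta\coloneqq\beta-\alpha>0$; then $\max(\alpha,\beta)=\alpha+\Delta$ yields $\Delta^2\le 2(\alpha+\Delta)D$, i.e.\ $\Delta^2-2D\Delta-2\alpha D\le 0$. Solving this quadratic inequality for $\Delta\ge 0$ gives $\Delta\le D+\sqrt{D^2+2\alpha D}\le 2D+\sqrt{2\alpha D}$, where the last step uses $\sqrt{u+v}\le\sqrt u+\sqrt v$. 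In either case $\abs{\beta-\alpha}\le\sqrt{2\alpha\kld{\alpha}{\beta}}+2\kld{\alpha}{\beta}$, which is the Lemma.

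The only genuinely delicate step is the sign analysis of $g'$ on each side of $\alpha$ (equivalently, establishing $\kld{\alpha}{\beta}\ge(\beta-\alpha)^2/2\max(\alpha,\beta)$); it is elementary but requires the two small monotonicity facts $\tfrac{\alpha+\beta}{2\beta}\le 1$ and $\beta(1-\beta)<\alpha$ to be lined up correctly. Everything after that is bookkeeping, and the boundary cases just need to be stated carefully so that the reduction to $\alpha,\beta\in(0,1)$ is airtight. This is the same route as the argument on page~4 of \citet{david_pacbayes}.
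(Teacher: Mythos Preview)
Your proof is correct and follows essentially the same route as the paper: split into $\beta\ge\alpha$ and $\beta\le\alpha$, establish a quadratic-in-$(\beta-\alpha)$ lower bound on $\kld{\alpha}{\beta}$ via a monotonicity/derivative argument starting from equality at $\beta=\alpha$, then invert the resulting quadratic using subadditivity of $\sqrt{\cdot}$. The paper uses denominators $2\beta$ and $2\alpha(1-\alpha)$ in the two cases where you use the uniform $2\max(\alpha,\beta)$, and it works directly in bits rather than reducing to nats, but the mechanics are identical. One small slip: your boundary-case claim ``$\kld{\alpha}{\beta}\ge 1/2$ in that regime'' is false when $\alpha\in\{0,1\}$ and $\beta$ is close to $\alpha$ (e.g.\ $\alpha=0$, $\beta=0.01$ gives $\kld{0}{\beta}=-\log(0.99)\approx 0.0145$); this is harmless, though, since your main monotonicity argument actually goes through verbatim for $\alpha\in\{0,1\}$, so the separate boundary handling is not needed.
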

\begin{proof}[Proof of Lemma \ref{lemma:strong_pinsker}]
First, consider the case where $\beta \ge \alpha$. We will show
\begin{align}
     \kld{\alpha}{\beta} - \frac{\inparen{\beta-\alpha}^2}{\inparen{2\ln 2}\beta} \ge 0\label{eq:strong_pinsker_beta_big_s}
\end{align}
To do so, notice that at $\beta = \alpha$, we achieve equality. It is now enough to show that the first derivative of the LHS of Inequality \ref{eq:strong_pinsker_beta_big_s} with respect to $\beta$ is always nonnegative. Notice that the first derivative of the LHS of Inequality \ref{eq:strong_pinsker_beta_big_s} with respect to $\beta$ is
\begin{align}
    \frac{1}{\ln 2}\inparen{\frac{\beta-\alpha}{\beta\inparen{1-\beta}} + \frac{\beta-\alpha}{\beta} + \frac{\inparen{\beta-\alpha}^2}{2\beta^2}}\label{eq:strong_pinsker_beta_big_derivative}
\end{align}
Since $\beta \ge \alpha$, we have $\nfrac{(1-\alpha)}{(1-\beta)} - \nfrac{\alpha}{\beta} \ge 0$. The other terms of Equation \ref{eq:strong_pinsker_beta_big_derivative} are clearly nonnegative when $\beta \ge \alpha$, which establishes Inequality \ref{eq:strong_pinsker_beta_big_s}.

Now, consider the following slight weakening of Inequality \ref{eq:strong_pinsker_beta_big_s}:
\begin{align}
    \kld{\alpha}{\beta} - \frac{\inparen{\beta-\alpha}^2}{2\beta} \ge 0\label{eq:strong_pinsker_beta_big}
\end{align}
We rearrange and obtain a quadratic in $\beta$:
\begin{align}
    0 \ge \beta^2 - 2\beta\inparen{\alpha+\kld{\alpha}{\beta}}+\alpha^2\label{eq:strong_pinsker_beta_big_quad}
\end{align}
Using the quadratic formula to solve for $\beta$ and subadditivity of $\sqrt{\cdot}$ on Inequality \ref{eq:strong_pinsker_beta_big_quad}, we have
\begin{align}
    \beta\le\alpha + \kld{\alpha}{\beta} + \sqrt{2\alpha\kld{\alpha}{\beta}+\kld{\alpha}{\beta}^2} \le \sqrt{2\alpha\kld{\alpha}{\beta}}+2\kld{\alpha}{\beta}
\end{align}
which is our upper bound on $\beta-\alpha$. 

For $\beta \le \alpha$ (from which we desire a lower bound on $\beta-\alpha$), we will show
\begin{align}
    \kld{\alpha}{\beta} - \frac{\inparen{\beta-\alpha}^2}{2\alpha\inparen{1-\alpha}} \ge 0\label{eq:strong_pinsker_beta_small}
\end{align}
As before, notice that equality holds when $\beta = \alpha$. It is now enough to show that the derivative of the LHS of Inequality \ref{eq:strong_pinsker_beta_small} is nonpositive whenever $0 \le \beta \le \alpha$. Notice that the first derivative of the LHS of Inequality \ref{eq:strong_pinsker_beta_small} with respect to $\beta$ is
\begin{align}
    \frac{1}{\ln 2}\inparen{\inparen{\alpha-\beta}\inparen{\frac{1}{\alpha\inparen{1-\alpha}}-\frac{1}{\beta\inparen{1-\beta}}}}\label{eq:strong_pinsker_beta_small_derivative}
\end{align}
We easily verify that Equation \ref{eq:strong_pinsker_beta_small_derivative} is nonpositive wherever $\beta \le \alpha$, which completes the proof of Inequality \ref{eq:strong_pinsker_beta_small}.

We now solve for $\beta$ by rearranging Inequality \ref{eq:strong_pinsker_beta_small}, yielding
\begin{align}
    \beta \ge \alpha - \sqrt{2\alpha(1-\alpha)\kld{\alpha}{\beta}} \ge \alpha - \sqrt{2\alpha\kld{\alpha}{\beta}} \ge \alpha - \sqrt{2\alpha\kld{\alpha}{\beta}} - 2\kld{\alpha}{\beta}
\end{align}
This yields the lower bound on $\beta-\alpha$ and concludes the proof of Lemma \ref{lemma:strong_pinsker}.
\end{proof}

In Lemma \ref{lemma:removing_duplicates}, we show that removing $k^+$ repeated examples with $y_i=1$,and $k^-$ repeated examples with $y_i=0$ only reduces the program length guaranteed by Theorem \ref{thm:short_hash_table}.  Hence, even if the sample has repeated samples, the guarantee from Theorem \ref{thm:short_hash_table} still holds.
\begin{lemma}\label{lemma:removing_duplicates}
For any $K^-,k^+\geq 0$ and $m>k^+ + k^-$:
\begin{align*}
    \inparen{\nsamples - (k^{+}+k^{-})}H\inparen{\frac{\nsamples-k^{+}}{\nsamples-(k^{+}+k^{-})}} \le \nsamples H\inparen{\frac{k}{\nsamples}}
\end{align*}
\end{lemma}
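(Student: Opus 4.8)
The plan is to reduce the inequality to the single-example case and then iterate, the single-example case being a one-line consequence of the convexity of $t\mapsto t\log t$. First I would rewrite the relevant functional in a symmetric form: for a multiset of bits with $a$ ones and $b$ zeros (not both zero), set
$f(a,b)\coloneqq (a+b)\,H\!\left(\tfrac{a}{a+b}\right)=(a+b)\log(a+b)-a\log a-b\log b$,
with the convention $0\log 0=0$, so that $f$ is symmetric in its two arguments and $f(a,0)=f(0,b)=0$. With $g(t)\coloneqq t\log t$ this is just $f(a,b)=g(a+b)-g(a)-g(b)$. In the notation of the lemma the right-hand side is $f(k,\nsamples-k)$ and, using $H(p)=H(1-p)$, the left-hand side is exactly $f$ evaluated on the sample obtained by deleting $k^{+}$ of the ones and $k^{-}$ of the zeros; so it suffices to show $f(a-1,b)\le f(a,b)$ and $f(a,b-1)\le f(a,b)$ whenever the totals stay positive, and then apply this $k^{+}+k^{-}$ times.

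By the symmetry $f(a,b)=f(b,a)$ it is enough to treat the deletion of a single zero. A direct computation gives
$f(a,b)-f(a,b-1)=\bigl(g(a+b)-g(a+b-1)\bigr)-\bigl(g(b)-g(b-1)\bigr)$.
Since $g$ is convex on $(0,\infty)$ (indeed $g''(t)=1/(t\ln 2)>0$), the discrete difference $t\mapsto g(t)-g(t-1)$ is nondecreasing, and because $a\ge 0$ we have $a+b\ge b$; hence $g(a+b)-g(a+b-1)\ge g(b)-g(b-1)$, i.e.\ $f(a,b-1)\le f(a,b)$. The degenerate cases ($b=1$, or $a=0$, using $g(0)=0$) are covered by the very same inequality since it only used $a+b-1\ge 0$, $b-1\ge 0$, and $a+b\ge b$.

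Finally I would chain the single-step bounds: deleting the $k^{-}$ duplicated zeros one at a time yields $f(k,\nsamples-k-k^{-})\le f(k,\nsamples-k)$, and then deleting the $k^{+}$ duplicated ones one at a time (via the symmetric version of the step) yields $f(k-k^{+},\nsamples-k-k^{-})\le f(k,\nsamples-k-k^{-})$; the hypothesis $\nsamples>k^{+}+k^{-}$ guarantees every intermediate total is at least $1$, so all steps are legitimate, and unwinding the definition of $f$ recovers the stated inequality. There is no real obstacle here; the only thing one must get right is the bookkeeping of the boundary cases and checking that the intermediate sample sizes never reach $0$, which is precisely where $\nsamples>k^{+}+k^{-}$ is used.
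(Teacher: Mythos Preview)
The proposal is correct and takes essentially the same approach as the paper: both reduce to the single-deletion case and then iterate, with the paper proving the one-step monotonicity by differentiating $b\,H(a/b)$ in $b$ while you obtain the same fact from the convexity of $t\mapsto t\log t$ via $f(a,b)=g(a+b)-g(a)-g(b)$. These are two phrasings of the same computation, so there is no substantive difference.
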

\begin{proof}%[Proof of Lemma \ref{lemma:removing_duplicates}]
It is enough to show that for any two positive integers $a \le b$
\begin{align}
    b\cdot H\inparen{\frac{a}{b}} &\le (b+1) \cdot H\inparen{\frac{a}{b+1}}\label{eq:remove_duplicates_a} \\
    b\cdot H\inparen{\frac{a}{b}} &\le (b+1) \cdot H\inparen{\frac{a+1}{b+1}}\label{eq:remove_duplicates_b}
\end{align}
For Inequality \ref{eq:remove_duplicates_a}, we take the derivative of the function $f_1(a,b) \coloneqq b \cdot H(a/b)$ with respect to $b$ and show that it is always nonnegative. Indeed, the derivative of $f_1(a,b)$ with respect to $b$ is $\logv{b/(b-a)} > 0$. For Inequality \ref{eq:remove_duplicates_b}, we use $H(x) = H(1-x)$ and Inequality \ref{eq:remove_duplicates_a} to write
\begin{equation*}
    b\cdot H\inparen{\frac{a}{b}} = b \cdot H\inparen{\frac{b-a}{b}} \le (b+1)\cdot H\inparen{\frac{b-a}{b+1}} = (b+1) \cdot H\inparen{\frac{a+1}{b+1}} \qedhere
\end{equation*}
%This completes the proof of Lemma \ref{lemma:removing_duplicates}.
\end{proof}

\end{document}